\documentclass[letterpaper, 10 pt, conference]{ieeeconf}  

\IEEEoverridecommandlockouts                              

\usepackage{cite}
\usepackage{amsmath,amssymb,amsfonts}
\usepackage{mathtools}
\usepackage{multirow}
\usepackage{float}
\usepackage{algorithm}
\usepackage{algorithmic}
\usepackage{graphicx}
\usepackage{textcomp}
\usepackage{xcolor}
\usepackage{pifont}
\usepackage{cuted}
\usepackage[hidelinks]{hyperref}
\hypersetup{
  colorlinks,
  allcolors=.,
  urlcolor=blue,
}

\newtheorem{theorem}{Theorem}

\newtheorem{corollary}{Corollary}

\title{\LARGE \bf
CSR: Infinite-Horizon Real-Time Policies with \\Massive Cached State Representations \\
{\normalsize \textit{Extended Technical Report for Paper Accepted to IEEE RA-L}}
}

\author{Robin Karlsson$^{1*,2}$ and Go Suzui$^{1}$
\thanks{$^{1}$GODOT Inc, Hyogo, Japan}%
\thanks{$^{2}$Graduate School of Informatics, Nagoya University, Aichi, Japan.}%
\thanks{*Corresponding author: {\tt\small robin.karlsson@godot.inc}}%
\thanks{© 2026 IEEE. Personal use of this material is permitted. Permission from IEEE must be obtained for all other uses, in any current or future media, including reprinting/republishing this material for advertising or promotional purposes, creating new collective works, for resale or redistribution to servers or lists, or reuse of any copyrighted component of this work in other works.}
}

\begin{document}
\bstctlcite{IEEEexample:BSTcontrol}

\maketitle
\thispagestyle{empty}
\pagestyle{empty}

\begin{abstract}
Deploying massive large language models (LLMs) as continuous cognitive engines for robotics is bottlenecked by the time-to-first-token (TTFT) latency required to process extensive state histories.
Existing solutions like RAG or sliding windows compromise global context or incur prohibitive re-computation costs.
We formalize the optimal task structure for minimizing latency and theoretically prove that prefix stability, incremental extensibility, and asynchronous state reconciliation are necessary conditions for real-time performance.
Building on these proofs, we introduce the Cached State Representation (CSR) framework as the practical instantiation of these properties, ensuring optimal KV-cache reuse.
To sustain these properties over infinite horizons, we further propose an Asynchronous State Reconciliation (ASR) algorithm that offloads state memory eviction to a parallel computational resource to eliminate latency spikes.
On a physical robot wirelessly connected to an on-premise GPU server, CSR achieves a 26-fold latency reduction (14.67s to 0.56s) for 120K token contexts with a 235B parameter model compared to a standard baseline.
On an embodied AI benchmark, we achieve SOTA recall (0.836 vs. 0.459) while maintaining RAG-level latency.
ASR is validated to sustain bounded, spike-free TTFT over 10 eviction cycles in continuous real-world operation.
Together, CSR and ASR enable massive LLMs to function as continuously operating, high-frequency ($\ge$ 2 Hz) embodied policies.
\end{abstract}

\section{INTRODUCTION}

Large Language Models (LLMs) have emerged as the dominant paradigm for realizing general-purpose robots, providing agents with commonsense reasoning~\cite{ahn2022saycan, huang2022innermonologue, liang2023codeaspolicies} and open-world generalization capabilities~\cite{driess2023palme, wang2023voyager, zitkovich2023rt2}.
To fully leverage these cognitive engines, robots require an extensive state representation that encapsulates static embodiment definitions, continuously updating sensory observations, and accumulated episodic history~\cite{packer2023memgpt, xiao2023streamingllm}.
However, maintaining such high-fidelity context introduces a fundamental conflict with real-time performance.
As the state history grows, the computational cost of processing input sequences grows linearly or quadratically, resulting in prohibitively high time-to-first-token (TTFT) latency~\cite{liu2024cachegen}.
This computational bottleneck typically forces a trade-off between the depth of the agent's context and its ability to maintain the high-frequency decision-making loops required for responsive interaction~\cite{zitkovich2023rt2, ghosh2024octo, kim2024openvla}.

While key-value (KV) caching effectively accelerates text generation in static contexts~\cite{kwon2023vllm, gim2024promptcache}, it is ill-suited for the dynamic state updates inherent to robotics.
Standard inference engines operate optimally on append-only sequences.
However, robotic high-level policies require continuous state evolution over long time horizons, where new observations are integrated and obsolete data must eventually be evicted due to finite memory.
Naively modifying the prompt structure by altering tokens that precede the end of the sequence invalidates the cached attention computations for all subsequent tokens.
Consequently, the model is forced to reprocess the entire context history for every inference step, causing latency to scale with context length.
Current mitigation strategies, such as Retrieval-Augmented Generation (RAG)~\cite{packer2023memgpt, wang2023longmem} or aggressive context pruning~\cite{xiao2023streamingllm, wang2023deps}, circumvent this cost but sacrifice global context awareness, limiting the agent's ability to execute consistent, long-horizon behavior.

\begin{figure}[t!]
\centering
\includegraphics[width=0.48\textwidth]{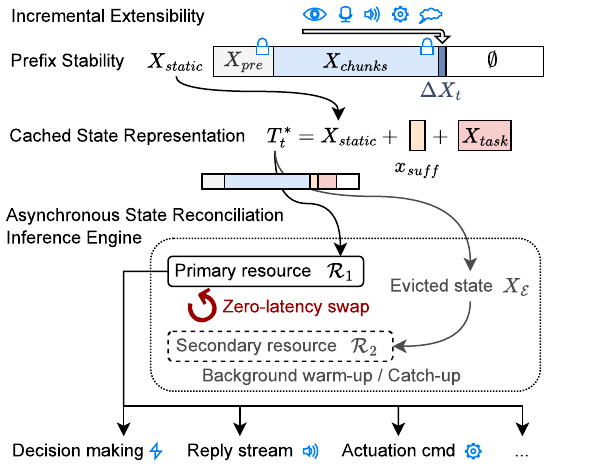}
\vspace{-4mm}
\caption{Overview of the Cached State Representation (CSR) framework for real-time robotic control with massive LLMs. (Top) CSR implements the optimal task structuring strategy $S^*$ by isolating volatile state information into a transient dynamic suffix $x_{suff} + X_{task}$ and appending state information chunks $x_t$ to a static prefix $X_{static}$. This structure satisfies \textit{Prefix Stability} and \textit{Incremental Extensibility}. (Bottom) The Asynchronous State Reconciliation (ASR) inference engine enables infinite-horizon continuous operation by eliminating eviction latency spikes. A secondary resource $\mathcal{R}_2$ precomputes the KV-cache of the evicted state $X_\mathcal{E}$ and synchronizes with the primary resource $\mathcal{R}_1$ via a zero-latency atomic swap.}
\label{fig:front_fig}
\vspace{-6mm}
\end{figure}

To resolve this conflict, we first formalize the optimal task structure $S^*$ for minimizing latency under hardware and information constraints. We theoretically prove two necessary conditions: prefix stability, requiring that historical context remains immutable to preserve cache validity, and incremental extensibility, requiring that new information be strictly appended. Building on these proofs, we introduce the Cached State Representation (CSR) framework as the practical instantiation of $S^*$, ensuring optimal KV-cache reuse. To sustain these properties over infinite horizons without violating memory limits, we further propose an Asynchronous State Reconciliation (ASR) algorithm that decouples the expensive state eviction process from the primary inference loop. ASR utilizes parallel hardware to precompute the KV-cache for the new state, eliminating latency spikes.

Experiments with a 235B parameter model demonstrate a 26-fold latency reduction (14.67\,s to 0.56\,s) for 120K token contexts, with ASR validated to sustain bounded, spike-free operation in real-world deployments. Together, CSR and ASR enable massive LLMs to function as continuously operating, high-frequency ($\ge$ 2\,Hz) embodied policies without restrictive state memory size.
Our specific contributions are:
\begin{itemize}
\item A theoretical formalization of the optimal task structure $S^*$, establishing prefix stability and incremental extensibility as necessary conditions for satisfying real-time deadlines $\tau_{max}$.
\item The Cached State Representation (CSR) framework as the practical implementation of $S^*$, utilizing Asynchronous State Reconciliation (ASR) to enable infinite-horizon real-time operation.
\item Empirical validation across varying model sizes (30B--235B) and hardware configurations, demonstrating high-frequency responsiveness ($\ge$ 2\,Hz) in continuous real-world deployments and SOTA embodied AI task performance recall (0.836 vs.\ 0.459) at RAG-level latency, proving both the physical viability and task performance advantage of our framework.
\end{itemize}

The rest of the paper is organized as follows. Sec.~\ref{sec:related_works} lists related works. Sec.~\ref{sec:csr} presents our formalization and method. Sec.~\ref{sec:experiments} and Sec.~\ref{sec:results} explain experiments and results for evaluating performance. Sec.~\ref{sec:conclusions} positions our findings.

\section{RELATED WORKS}
\label{sec:related_works}

\noindent \textit{LLM applications in robotics:}
LLMs provides an unprecedented possibility to leverage commonsense knowledge in various robotics tasks.
Using LLMs for autoregressive action decision making~\cite{ahn2022saycan, yao2023react, shinn2023reflexion} effectively enhances the versatility and robustness of real-world robots with limited feasibility to learn by trial-and-error.
High-level planing~\cite{rana2023sayplan, nottingham2023deckard, liu2023llmp} benefits from the remarkable reasoning capability of LLMs.
Multimodal reasoning LLMs~\cite{zeng2023socratic, driess2023palme} and Vision-Language-Action (VLA) models~\cite{zitkovich2023rt2, padalkar2023rtx, kim2024openvla, ghosh2024octo} enables robots to intelligently assess the robot's state and predict optimal actions directly from perceptual observations.
LLMs can also serve as an internal reward model that allows robots to improve task performance from real-world environment feedback~\cite{huang2022innermonologue, wang2023voyager, wang2023deps, zhang2023rememberer}.
Coding LLMs can also be used to iteratively generate highly performant programmatic reward functions~\cite{ma2024eureka, ma2024dreureka} and action execution code~\cite{singh2023progprompt, liang2023codeaspolicies}.
Our proposed Cached State Representation (CSR) expands the application domain of massive LLMs in robotics by enabling high frequency action decision making and instantaneous interaction with other agents including humans.

\noindent \textit{Agentic state representation:}
A fundamental challenge in LLM-based robotics is how to represent a continuous, 4D spatiotemporal reality into the discrete, sequential latent space required for Transformer architectures.
The 3D scene graph approach abstracts geometry into hierarchical semantic nodes~\cite{rana2023sayplan, mandi2024roco, gu2024conceptgraphs}. By linearizing these graphs into structured text, LLMs gain a symbolic, queryable summary of the environment.
Semantic topological maps represent the environment as searchable, graph-based manifolds where nodes or navigable landmarks consist of localized semantic embeddings~\cite{shah2022lmnav, huang2023vlmap}.
The structured state approach bypass symbolic labeling in favor of directly encode environment information as token sequences with explicit metric and temporal information for fine-grained spatial reasoning.
Metric tokenization methods~\cite{chen2024spatialvlm, zitkovich2023rt2, mariano2024kat} discreteizes 3D geometry. 
Multimodal embedding stream methods~\cite{huang2023voxposer, wang2025memento} creates queryable latent embedding state buffers from encoded high-dimensional sensory percepts.
%
%
Short-term memory caching approaches attempt to maintain long-horizon agentic consistency within a finite context window. RecurrentGPT~\cite{zhou2023recurrentgpt} implements a recurrent mechanism where the model generates an latent state token embedding. MemGPT~\cite{packer2023memgpt} utilizes a paging mechanism to swap relevant state information in and out of the token sequence. Closest to our method, StreamingLLM~\cite{xiao2023streamingllm} enables infinite-length generation by recomputing the KV cache over a fixed-size sliding window of recent tokens and attention sinks, trading long-horizon memory fidelity for low latency through a short window rather than cache reuse.
%
%
Long-term episodic memory frameworks utilize external vector databases and retrieval-augmented generation (RAG) to bridge experiences across multiple trials~\cite{park2023generativeagents, wang2023longmem, zhao2026amabench}. While effective for lifelong learning, these retrieval-based approaches often introduce significant query latency and context fragmentation, making them ill-suited for real-time behavior generation.
Our CSR framework addresses the memory fidelity limitation of StreamingLLM~\cite{xiao2023streamingllm} by replacing widow truncation with structured state partitioning, enabling KV cache reuse while retaining the full state history up to $\mathcal{O}(10^5)$ tokens, with latency scaling linearly in sequence length.

\noindent \textit{Real-Time and efficient LLM inference:}
Continuously operating robots performing tasks relying on real-time interaction with the environment requires low latency and high throughput LLM inference engines.
Model compression techniques, such as weight quantization~\cite{frantar2023gptq, lin2024awq} and low-precision architectural paradigms~\cite{wang2025bitnet}, reduce the memory and computational overhead of LLMs with limited performance degradation.
Speculative decoding and parallel sampling strategies~\cite{leviathan2023fastspeculative, cai2024medusameth} speeds up generation by using small draft models to predict multiple tokens each forward pass.
Innovations in efficient attention computation mitigates the quadratic memory complexity limiting very long states by hardware-optimized kernel~\cite{dao2024flashattention2} and distributed computation strategies~\cite{liu2024ringattention}.
%
Recent LLM serving frameworks implement KV-cache optimizations including dynamic memory orchestration via paged allocation~\cite{kwon2023vllm} and continuous batching~\cite{yu2022orca} to eliminate VRAM fragmentation and maximize inference throughput.
Semantic context reuse strategies like RadixAttention~\cite{zheng2023sglang} and prefix caching~\cite{gim2024promptcache} allow for the instantaneous reuse of pre-computed token sequences.
KV-Cache compression frameworks~\cite{liu2024cachegen, pham2024chunkattention} minimize the memory footprint and transmission latency of state information shared over a network.
Our proposed CSR framework extends these serving primitives by providing a theoretical framework for optimal state structuring for KV-cache optimization frameworks like vLLM~\cite{kwon2023vllm} to ensure minimal latency, and an ASR algorithm that together with CSR ensures prefix stability and zero-latency state eviction for continuous, high-frequency robotic control loops. While CSR does not reduce continuous token output generation speed, CSR is compatible with and compliments existing methods for this purpose~\cite{leviathan2023fastspeculative}.

\section{CACHED STATE REPRESENTATION}
\label{sec:csr}

In this section we present how intermediate computational states are cached and reused to speed up single Transformer-based LLM inference tasks.
Thereafter we show how to extend cache reuse to sequential requests via the Cached State Representation (CSR) framework, and how the Asynchronous State Reconciliation (ASR) algorithm eliminates eviction-induced latency spikes to sustain this property over infinite horizons, achieving minimal \textit{time-to-first-token} (TTFT) for high-frequency robot AI inference. 

\subsection{Background: Transformer Architecture and KV Caching}

The Transformers architecture is a general-purpose model of computation that excels in learning the probability distribution of sequences of discrete elements
\begin{equation}
    X = (x_1, \ldots , x_N) \in \mathcal{X}^N
\end{equation}
by autoregressive decomposition of the joint probability of a sequence:
\begin{align}
    P(X) &= P(x_1, \ldots, x_N) \\
         &= P(x_n | x_1, \ldots, x_{n-1}) \ldots P(x_2|x_1) P(x_1).
\end{align}

Current LLMs are variations of the decoder-only Transformer architecture and model token sequence probabilities. A token  represent a substring $x$ from the finite set $\mathcal{X}$ containing $K$ substrings from which all possible strings in a language can be created. An LLM predicts a discrete probability distribution $y \in \mathbb{R}^{K}$ for the next token $x_{n+1}$ by first mapping the discrete token sequence $X$ into a latent token embedding matrix $H^{(0)} = [h_1^{(0)}, \ldots, h_n^{(0)}]$ with embedding vectors $h \in \mathbb{R}^{d_h}$. Next, $L$ sequential decoder blocks recurrently updates $H^{(\ell-1)}$ to $H^{(\ell)}$ by a self-attention module and feed forward networks (FFN):
\begin{align}
    k_n^{(\ell)} &= W_K^{(\ell)} h_n^{(\ell-1)} \in \mathbb{R}^{1 \times d_k} \label{eq:k} \\
    v_n^{(\ell)} &= W_V^{(\ell)} h_n^{(\ell-1)} \in \mathbb{R}^{1 \times d_v} \label{eq:v} \\
    q_n^{(\ell)} &= W_Q^{(\ell)} h_n^{(\ell-1)} \in \mathbb{R}^{1 \times d_k} \\
    a_n^{(\ell)} &= \text{softmax}\left(\left[\frac{q_n^{(\ell)} \cdot k_{\tau}^{(\ell)}}{\sqrt{d_k}}\right]_{\tau=1}^{n}\right) \label{eq:causal_attention_1} \\
    &= \text{softmax}\left(\frac{q_n^{(\ell)} [\mathcal{K}_{1:n}^{(\ell)}]^T}{\sqrt{d_k}}\right) \in \mathbb{R}^{n} \label{eq:causal_attention_2} \\
    o_n^{(\ell)} &= \sum_{\tau=1}^n a_{n,\tau}^{(\ell)} v_{n,\tau}^{(\ell)} = a_n^{(\ell)} [\mathcal{V}_{1:n}^{(\ell)}] \in \mathbb{R}^{d_v} \label{eq:causal_attention_3} \\
    z_n^{(\ell)} &= \text{LN}(h_n^{(\ell-1)} + W_O^{(\ell)} o_n^{(\ell)}) \\
    h_n^{(\ell)} &= \text{LN}\left(z_n^{(\ell)} + \text{FFN}^{(\ell)}\left(z_n^{(\ell)}\right)\right) \\
    y &= \text{softmax}(W_Y H^{(L)}) \\
    x_{n+1} &= f(y) \label{eq:next_token_pred}
\end{align}
where
\begin{align}
[\mathcal{K}_{1:n}^{(\ell)}] &= [k_1^{(\ell)}, \ldots,  k_n^{(\ell)}] \label{eq:K_cache} \\
[\mathcal{V}_{1:n}^{(\ell)}] &= [v_1^{(\ell)}, \ldots,  v_n^{(\ell)}] \label{eq:V_cache}
\end{align} 
denotes cached concatenations of keys and value vectors for tokens 1 to $n$. Caching key and value vectors as in \eqref{eq:K_cache}-\eqref{eq:V_cache} effectively amortizes or reuses the computation of intermediate computational states for tokens 1 to $n$, as those states are not altered by subsequent tokens due to the causal attention mechanism.

Utilizing KV caching is a standard technique for reducing latency when processing an independent growing token sequence $X$ representing a single inference task $T$. In this paper, we consider robot AI inference tasks such as decision making, reply, and actuation command generation. However, high-frequency embodied robot AI tasks are not independent, but sequential and conditional on a subset of all available information in the robot system.

In the next section, we explain why naively structured sequential inference tasks generally breaks the cache, resulting in high TTFT latency for robot AI applications.

\subsection{KV Cache Reuse Across Sequential Tasks}
\label{sec:kv_cache_reuse}

We formalize a robot AI inference task $T$ as
\begin{equation}
    \label{eq:task_repr}
    T = (X_{in}, X_{out}) = [x_1^{in}, \ldots, x_N^{in}, x_{N+1}^{out}, \ldots, x_M^{out}],
\end{equation}
where $X_{in}$ and $X_{out}$ are the input and output token sequences. $X_{in}$ contains the task specification and information required to complete the task. $X_{out}$ is initially an empty set $\emptyset$ and is appended with new tokens generated through \eqref{eq:next_token_pred}.

When starting to process a new task
\begin{equation}
    T' = (X_{in}', \emptyset) = [\acute{x}_1^{in}, \ldots, \acute{x}_S^{in}],
\end{equation}
the cached KV cache vectors \eqref{eq:K_cache}-\eqref{eq:V_cache} for all tokens in \eqref{eq:task_repr} becomes invalid at the first token in $T'$ differing with the corresponding token in $T$:
\begin{equation}
    \label{eq:first_differing_token}
    i^* = \text{min}\{i \in \{1, \ldots, S\} : T_i \neq T_i'\}.
\end{equation}
In particular, if the first token $i = 1$ in \eqref{eq:first_differing_token} for $T'$ is different from $T$, the entire KV cache for $X_{in}'$ needs to be recomputed from scratch. Note that this behavior is expected and unavoidable for general independent tasks $T$ and $T'$.

In the next section we formalize the sequential robot AI inference task objective, and derive required properties for an optimal task representation.



\subsection{Optimal Task Structure}
\label{sec:optimal_task_structure}

To derive an optimal solution to the problem explained in Sec.~\ref{sec:kv_cache_reuse}, we first formalize the optimization objective for sequential robot AI inference tasks as
\begin{equation}
\label{eq:seq_robot_ai_optimization_obj}
\min_{S} \max_{t} \text{TTFT}(T_t | S).
\end{equation}

The objective \eqref{eq:seq_robot_ai_optimization_obj} is subject to the following constraints:
\begin{align}
    \text{TTFT}(T_t | S) &\leq \tau_{max} \quad \forall t \label{eq:real-time_constraint} \\
    \text{Info}(T_t | S) &\supseteq \mathcal{I}_{req}^{(t)} \quad \forall t \label{eq:information_sufficiency_constraint} \\
    \text{Memory}(T_t | S) &\leq M_{max} \quad \forall t \label{eq:finite_memory_constraint}
\end{align}
where
$S$ is a task structuring strategy from the set of all possible strategies $\mathcal{S}$.
$\tau_{max}$ is maximum allowed deadline for real-time inference.  
$\text{Info}(T_t)$ is the information contained in the task representation $T_t$ and must contain all information required to complete the task $\mathcal{I}_{req}^{(t)}$.
$M_{max}$ represents the finite memory in which $T_t$ can be stored. 

Next we derive TTFT as a function of cache reuse. From equations \eqref{eq:k}-\eqref{eq:next_token_pred}, the computational cost for processing task $T_t$ is
\begin{equation}
    \label{eq:TTFT_derivation_1}
    \text{TTFT}(T_t) = \sum_{i=i^*}^{|T_t|} \sum_{\ell=1}^{L} C \cdot f_{ops}(i, \ell),
\end{equation}
where $C$ is the computational cost per operation and $f_{ops}(i, \ell)$ is the number of operations for token $i$ at layer $\ell$. For standard Transformers utilizing KV caching, the number of operations is a linear function of latent embedding dimensionality $d_h$ and the number of preceding tokens due to the causal attention mechanism \eqref{eq:causal_attention_1}-\eqref{eq:causal_attention_3}:
\begin{equation}
    \label{eq:f_ops}
    f_{ops}(i, \ell) = \mathcal{O}(i \cdot d_h).
\end{equation}
Therefore \eqref{eq:TTFT_derivation_1} can be simplified as a linear function of $i$:
\begin{equation}
    \label{eq:TTFT_derivation_2}
    \text{TTFT}(T_t) \propto L \: d_h \: \sum_{i=i^*}^{|T_t|} i = L \: d_h \: \frac{(|T_t| - i^* + 1)(|T_t| + i^*)}{2}.
\end{equation}
It follows that minimizing TTFT requires maximizing $i^*(T_{t-1}, T_t)$ for sequential tasks $T_t$ and $T_{t+1}$ according to \eqref{eq:first_differing_token}.


\begin{theorem}\label{theorem:prefix_stability}[Prefix Stability]
To satisfy the real-time constraint \eqref{eq:real-time_constraint} for sequential tasks with bounded computation, sufficient prefix stability is necessary.
The task structure $S$ must ensure that the first differing token index $i^*$ is greater than or equal to a minimum stable prefix length $i^*_{min}$:
\begin{equation}
    \label{eq:prefix_stability_condition}
    i^*(T_t, T_{t+1}) \ge i^*_{min} \quad \forall t,
\end{equation}
where $i^*_{min}$ is defined as the minimum prefix match $i^*$ such that the computational cost $N_{FLOPs}$ for the new task $T_{t+1}$ satisfies the real-time budget $B_{max}$:
\begin{equation}
    N_{FLOPs}(T_{t+1} | i^* = i^*_{min}) \le B_{max}.
\end{equation}
An optimal strategy $S^*$ minimizes~\eqref{eq:seq_robot_ai_optimization_obj} by structuring sequential tasks $T_t$ to achieve optimal prefix stability:
\begin{equation}
    \label{eq:optimal_strategy_definition}
    S^* \implies i^*(T_t, T_{t+1}) = |T_t| + 1 \quad \forall t.
\end{equation}

\end{theorem}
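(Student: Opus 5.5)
The proof has two parts: a necessity claim for the threshold $i^*_{min}$, and an optimality claim for $S^*$. Both follow from the cost model \eqref{eq:TTFT_derivation_1}--\eqref{eq:TTFT_derivation_2}, once we observe that TTFT is a strictly decreasing function of $i^*$ when $|T_{t+1}|$ is fixed. Define
\begin{equation*}
g(i^*) = \frac{(|T_{t+1}| - i^* + 1)(|T_{t+1}| + i^*)}{2} = \sum_{i=i^*}^{|T_{t+1}|} i .
\end{equation*}
Since every summand is positive, removing the smallest index lowers the sum. Hence $g$ is strictly decreasing in $i^*$ on $\{1,\dots,|T_{t+1}|+1\}$, with $g(|T_{t+1}|+1)=0$. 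The computational cost $N_{FLOPs}(T_{t+1}\mid i^*)$ is proportional to $L\,d_h\,g(i^*)$ and therefore inherits this monotonicity.

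For necessity, set $B_{max}=\tau_{max}/C$, up to the proportionality constant in \eqref{eq:TTFT_derivation_2}. Define $i^*_{min}$ as the smallest $i^*$ with $N_{FLOPs}(T_{t+1}\mid i^*)\le B_{max}$. By monotonicity, the feasible set $\{i^* : N_{FLOPs}\le B_{max}\}$ is an up-set $\{i^*_{min},\dots,|T_{t+1}|+1\}$. Now argue by contraposition. Suppose some $t$ has $i^*(T_t,T_{t+1})<i^*_{min}$. Then by minimality of $i^*_{min}$, $N_{FLOPs}>B_{max}$, so $\text{TTFT}(T_{t+1}\mid S)>\tau_{max}$, which violates \eqref{eq:real-time_constraint}. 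Therefore \eqref{eq:prefix_stability_condition} must hold for all $t$.

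For optimality, note that the KV cache available when $T_{t+1}$ starts covers at most the $|T_t|$ tokens of the previous task, inputs and generated outputs alike. Consequently cache reuse cannot extend past position $|T_t|$; in the notation of \eqref{eq:first_differing_token}, the largest achievable value is $i^* = |T_t|+1$. This value is attained exactly when $T_t$ is a prefix of $T_{t+1}$. By monotonicity of $g$, for each fixed $t$ this choice minimizes $\text{TTFT}(T_{t+1}\mid S)$ pointwise. A strategy that minimizes every term also minimizes $\max_t$, so it solves \eqref{eq:seq_robot_ai_optimization_obj}. Conversely, suppose a strategy has $i^*<|T_t|+1$ at some step. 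Then it pays the extra recomputation $\sum_{i=i^*}^{|T_t|} i>0$ there, and replacing that step with an append-only continuation carrying the same information strictly lowers that term. This gives \eqref{eq:optimal_strategy_definition}.

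The main obstacle is this last exchange argument. The difficulty is that $|T_{t+1}|$ is not independent of $S$, and the information and memory constraints \eqref{eq:information_sufficiency_constraint}--\eqref{eq:finite_memory_constraint} may rule out a pure append. I would handle this by stating the comparison over strategies that yield the same suffix content: hold the new information appended at step $t$ fixed, so that only the positions of unchanged tokens vary. Under this restriction $|T_{t+1}|$ is fixed and the monotonicity of $g$ applies directly. I would also state explicitly that when finite memory forces eviction, \eqref{eq:optimal_strategy_definition} holds on the steps between evictions. The eviction steps themselves are deferred to the asynchronous reconciliation mechanism.
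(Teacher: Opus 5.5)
Your argument is sound within the paper's cost model, but it follows a different route from the paper. You derive necessity as an exact contrapositive from the strict monotonicity of $g(i^*)=\sum_{i=i^*}^{|T_{t+1}|} i$. The feasible indices form an up-set, so $i^*<i^*_{min}$ forces $N_{FLOPs}>B_{max}$ and hence $\text{TTFT}>\tau_{max}$, with no assumption on the sizes involved.

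The paper instead argues by contradiction against the specific choice $i^*=1$. It compares the full-recomputation cost $\propto\tfrac12 L d_h|T_t|^2$ with the prefix-stable cost for a $\Delta$-token increment, $\propto \Delta|T_t|$. This gives the ratio $|T_t|/\Delta\approx 10^5/10^3=10^2$ in the regime of interest. The paper concludes that if the prefix-stable cost fits the budget, full recomputation overshoots it by two orders of magnitude.

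The trade-off is as follows. Your version is rigorous and parameter-free, but on the necessity side it is essentially definitional. It never shows that $i^*_{min}$ is actually far from $1$. That substantive, regime-dependent content is exactly what the paper's ratio estimate supplies, at the cost of implicitly assuming $B_{max}$ is of the same order as the prefix-stable cost. On the other hand, you give a real argument for the optimality clause \eqref{eq:optimal_strategy_definition}, which the appendix proof does not address. The paper relies only on the remark after \eqref{eq:TTFT_derivation_2} that minimizing TTFT requires maximizing $i^*$.

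One step in that optimality part needs tightening. In your ``conversely'' direction, strictly lowering the cost at a single step does not lower $\max_t \text{TTFT}(T_t\mid S)$ unless that step attains the maximum. So a min--max minimizer need not satisfy $i^*=|T_t|+1$ at every $t$. What your exchange argument establishes is that the append-only structure is optimal and pointwise dominant within your fixed-content comparison class. To obtain the implication $S^*\implies i^*(T_t,T_{t+1})=|T_t|+1$ for all $t$ as stated, you should explicitly take optimality in the pointwise sense (every step's TTFT minimized) or in the lexicographic sense.
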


\begin{proof}
    See Appendix A. 
\end{proof}

The proof is based on estimating the required FLOPs without prefix stability to be $\mathcal{O}(100)$ higher than with prefix stability.

\begin{theorem}[Incremental Extensibility]
\label{theorem:incremental_extensibility}
For a robot AI system where the total information context is non-decreasing ($\text{Info}(T_{t+1} | S) \supseteq \text{Info}(T_t | S)$), any task structuring strategy $S$ must be incrementally extensible by a state increment $\Delta X_t$
\begin{equation}
    \label{eq:incremental_extensibility}
    T_{t+1} = T_t \oplus \Delta X_t
\end{equation}
in order to satisfy the real-time constraint \eqref{eq:real-time_constraint} and information sufficiency constraint \eqref{eq:information_sufficiency_constraint} for all tasks in a sequence $(T_t)$.
\end{theorem}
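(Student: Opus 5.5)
The plan is to derive Theorem~\ref{theorem:incremental_extensibility} from Theorem~\ref{theorem:prefix_stability} together with the information sufficiency constraint \eqref{eq:information_sufficiency_constraint}. The argument is a squeeze: prefix stability forces $T_{t+1}$ to contain $T_t$ as a prefix, and non-decreasing information forces $T_{t+1}$ to carry the new information. The only structure consistent with both is appending a suffix $\Delta X_t$.

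\textbf{Step 1 (prefix forced by real-time).} Invoke Theorem~\ref{theorem:prefix_stability}. Any strategy meeting \eqref{eq:real-time_constraint} for all $t$ must satisfy $i^*(T_t,T_{t+1})\ge i^*_{min}$. The optimal strategy $S^*$ attains $i^*(T_t,T_{t+1})=|T_t|+1$ by \eqref{eq:optimal_strategy_definition}. By the definition \eqref{eq:first_differing_token}, $i^*=|T_t|+1$ means $T_{t+1,i}=T_{t,i}$ for all $i\le |T_t|$. Hence $T_t$ is a prefix of $T_{t+1}$, and we may write $T_{t+1}=T_t\oplus \Delta X_t$ with $\Delta X_t := (T_{t+1,i})_{i>|T_t|}$.

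The case $i^*_{min}\le i^*\le |T_t|$ also needs handling. There I would use \eqref{eq:TTFT_derivation_2}. Any divergence at position $i^*\le|T_t|$ forces recomputation of the $|T_t|-i^*+1$ already-cached tokens of $T_t$ in addition to $\Delta X_t$. Since the context is non-decreasing, $|T_t|$ grows without bound over the sequence unless information is discarded. The recomputed cost $\propto \sum_{i=i^*}^{|T_t|} i$ therefore grows while $B_{max}$ is fixed, so for large $t$ the budget is eventually violated. This leaves the append-only structure as the only admissible one in the limit, and as the minimizer of \eqref{eq:seq_robot_ai_optimization_obj} at every $t$.

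\textbf{Step 2 (information forces a nonempty increment).} Assume $\mathrm{Info}(T_{t+1})\supsetneq\mathrm{Info}(T_t)$ whenever a new observation arrives, as required by \eqref{eq:information_sufficiency_constraint} with $\mathcal{I}_{req}^{(t+1)}\not\subseteq \mathrm{Info}(T_t)$. Then $T_{t+1}\ne T_t$. Combined with Step~1, the new information must reside in positions beyond $|T_t|$, i.e.\ in $\Delta X_t\neq\emptyset$. Information cannot be injected earlier without lowering $i^*$ below $|T_t|+1$, which is exactly what Step~1 rules out.

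\textbf{Main obstacle.} The delicate point is making Step~1 rigorous for non-optimal but merely feasible strategies. The statement asserts that \emph{any} feasible $S$ must be extensible, yet Theorem~\ref{theorem:prefix_stability} only gives $i^*\ge i^*_{min}$.

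My first approach is the growth argument above. I would show that a strategy rewriting a bounded-length tail of $T_t$ at every step either loses information or accumulates unbounded recomputation. For the first horn, a rewritten and shortened tail contradicts the non-decreasing-information hypothesis. For the second, a rewritten tail of growing length contradicts the fixed budget $B_{max}$.

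If this does not close cleanly, I would fall back on proving the claim for strategies attaining the optimum of \eqref{eq:seq_robot_ai_optimization_obj}. There the result follows directly from \eqref{eq:optimal_strategy_definition}. I would then read ``must'' in the statement as referring to $S^*$, with suffix-only edits of transient content treated as part of $\Delta X_t$.
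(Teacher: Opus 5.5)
Your core argument is the paper's. Theorem~\ref{theorem:prefix_stability} plus the definition \eqref{eq:first_differing_token} gives that $i^*(T_t,T_{t+1})=|T_t|+1$ holds exactly when $T_t$ is a prefix of $T_{t+1}$. Any insertion, deletion or replacement at an index $i\le|T_t|$ forces $i^*\le|T_t|$. The paper phrases this as a contradiction and handles the information constraint directly from the non-decreasing-information premise.

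The step that would fail is the growth argument you add for merely feasible strategies. Under \eqref{eq:TTFT_derivation_2}, pure appending of $\Delta$ tokens already costs $\propto\sum_{i=|T_t|+1}^{|T_t|+\Delta} i\approx\Delta|T_t|$. This also diverges as $|T_t|\to\infty$, so ``recomputation grows while $B_{max}$ is fixed'' rules out the append-only structure just as surely as it rules out tail rewrites. A rewritten tail of bounded length $k$ costs about $(k+\Delta)|T_t|$, only a constant factor $(k+\Delta)/\Delta$ more than appending, so no fixed-budget argument separates the two. Your dichotomy also omits a bounded tail rewritten with content of equal or greater length, which neither loses information nor grows.

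The underlying reason is that Theorem~\ref{theorem:prefix_stability} only yields $i^*\ge i^*_{min}$. That by itself permits edits at positions in $[i^*_{min},|T_t|]$, so the ``any feasible $S$'' reading cannot be derived from it. The paper's own CSR structure is such a strategy: $X_{dyn}$ is replaced at every step, so $i^*=|X_{static}|+1\le|T_t|$, yet the paper presents it as meeting the real-time requirement. The paper does not close this case either. It asserts that $\oplus$ ``is the only optimal structure'' and that smaller $i^*$ ``increasingly violates'' the prefix-stability condition.

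What is actually established, by the paper and by you, is your fallback: the claim for the optimal $S^*$ via \eqref{eq:optimal_strategy_definition}, with transient suffix edits absorbed into $\Delta X_t$. This matches the paper's remark that $T_t^*$ approaches $S^*$ as $|X_{dyn}|\to 0$. Drop the growth argument and present the fallback as the proof, stating the restriction to $S^*$ explicitly.
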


\begin{proof}
    See Appendix B. 
\end{proof}

The proof shows by contradiction that any $S$ not following Theorem~\ref{theorem:incremental_extensibility} violates Theorem~\ref{theorem:prefix_stability}.

\begin{theorem}\label{theorem:eviction}[Eviction]
Given the incremental extensibility from Theorem~\ref{theorem:incremental_extensibility}, any strategy $S$ for infinite-horizon operation ($t \to \infty$) must include an eviction operation $\mathcal{E}$ to ensures the memory constraint \eqref{eq:finite_memory_constraint} is satisfied:
\begin{equation}
\label{eq:eviction_operation}
T_{t+1} = \mathcal{E}(T_t \oplus \Delta X_t)
\end{equation}
\end{theorem}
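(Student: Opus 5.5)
The argument is by contradiction. Suppose $S$ is a strategy for infinite-horizon operation that obeys Theorem~\ref{theorem:incremental_extensibility}, so $T_{t+1} = T_t \oplus \Delta X_t$ for all $t$, and never applies a length-reducing operation. I will show that the memory constraint \eqref{eq:finite_memory_constraint} must then fail at some finite $t$. The same approach also shows that the real-time constraint \eqref{eq:real-time_constraint} fails, which strengthens the conclusion.

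\textbf{Length grows without bound.} Under pure incremental extension, $|T_{t+1}| = |T_t| + |\Delta X_t|$. For a continuously operating robot, each step adds at least one new observation token, so $|\Delta X_t| \ge \delta \ge 1$. This is the modelling assumption I would state explicitly: if $\Delta X_t = \emptyset$ from some point on, no new information enters, which contradicts the non-decreasing, updating context assumed in Theorem~\ref{theorem:incremental_extensibility}. Telescoping gives $|T_t| \ge |T_0| + \delta t$, so $|T_t| \to \infty$ as $t \to \infty$.

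\textbf{Memory and deadline violations.} The KV cache stores $k^{(\ell)}_i, v^{(\ell)}_i$ for every token and layer, by \eqref{eq:K_cache}--\eqref{eq:V_cache}. Hence $\text{Memory}(T_t \mid S) \ge c\, L (d_k + d_v) |T_t|$ for some constant $c>0$ set by the bytes per element. Choose $t_0 > (M_{max}/(cL(d_k+d_v)) - |T_0|)/\delta$. Then $\text{Memory}(T_{t_0} \mid S) > M_{max}$, which violates \eqref{eq:finite_memory_constraint}. Likewise, even with optimal prefix stability $i^* = |T_t|+1$, \eqref{eq:TTFT_derivation_2} gives a TTFT for the new increment that grows roughly like $|\Delta X_t|\,|T_{t+1}|$, so it eventually exceeds $\tau_{max}$.

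\textbf{Eviction is forced.} Some map must therefore reduce $|T|$: an operator $\mathcal{E}$ with $|\mathcal{E}(T)| < |T|$ whenever $|T|$ approaches the memory bound. Applying it to the extended task gives the form \eqref{eq:eviction_operation}. Since the extension by $\Delta X_t$ has already been forced by Theorem~\ref{theorem:incremental_extensibility}, $\mathcal{E}$ can only act on the result of $T_t \oplus \Delta X_t$, which yields exactly that composition.

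\textbf{Main obstacle.} The difficult step is formalizing $\mathcal{E}$ so that it does not trivially contradict Theorem~\ref{theorem:incremental_extensibility} and the non-decreasing information premise. I would restrict that premise to hold between eviction events, and require $\text{Info}(\mathcal{E}(T)) \supseteq \mathcal{I}_{req}^{(t+1)}$, for example by summarizing the evicted content rather than deleting it. The cost is a one-time prefix break, $i^* < |T_t|+1$, at eviction steps, which motivates handling that cost asynchronously.
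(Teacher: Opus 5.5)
Your proof is correct and follows the paper's argument essentially step for step. You assume no eviction, use Theorem~\ref{theorem:incremental_extensibility} to obtain $|T_t| = |T_0| + \sum_{\tau<t} |\Delta X_\tau| \to \infty$, and then use memory proportional to length to find a finite time at which \eqref{eq:finite_memory_constraint} fails. Your uniform bound $|\Delta X_t| \ge \delta$ and your KV-cache lower bound on memory are tighter than the paper's $\mathbb{E}[|\Delta X_t|] > 0$ and its assumed proportionality; the paper's expectation condition alone does not strictly force $\sum_\tau |\Delta X_\tau| \to \infty$. Your real-time violation argument and your remark that eviction conflicts with the non-decreasing-information premise are extras the paper does not use in this proof.
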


\begin{proof}
    See Appendix C. 
\end{proof}

The proof by contradiction demonstrates that the unbounded growth of $T_t$ as $t \rightarrow \infty$ according to Theorem~\ref{theorem:incremental_extensibility} and inevitably violates \eqref{eq:finite_memory_constraint}.

Theorems~\ref{theorem:prefix_stability}~and~\ref{theorem:eviction} creates a fundamental tension, as applying the eviction operation \eqref{eq:eviction_operation} violates token sequence order required for prefix stability \eqref{eq:prefix_stability_condition}.
Theorem~\ref{theorem:reconciliation} proposes a solution to satisfy Theorems~\ref{theorem:prefix_stability}-\ref{theorem:eviction} simultaneously by performing asynchronous state reconciliation in parallel on a secondary computation hardware.

\begin{theorem}\label{theorem:reconciliation}[Asynchronous State Reconciliation]
To satisfy the real-time constraint \eqref{eq:real-time_constraint} (Theorem~\ref{theorem:prefix_stability}) during an eviction operation \eqref{eq:eviction_operation} (Theorem~\ref{theorem:eviction}), the KV cache computation for the new evicted state $T_t^{\mathcal{E}}$ must be offloaded to a parallel computational resource.

Continuous real-time operation is achieved by:
\begin{enumerate}
    \item Warming up $T_t^{\mathcal{E}}$ on the secondary resource.
    \item Reconciling $T_t^{\mathcal{E}}$ with the new token sequence $\Delta X_t$ processed by the primary resource during warm-up.
    \item Swapping the primary state $T_t$ with the reconciled state $T_t^{\mathcal{E}}$ before $T_t$ violates the memory constraint \eqref{eq:finite_memory_constraint}.
\end{enumerate}
\end{theorem}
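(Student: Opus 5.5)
The plan is to argue in two parts: first necessity of offloading, then sufficiency of the three-step procedure. Both parts reuse the TTFT cost model \eqref{eq:TTFT_derivation_2} together with Theorems~\ref{theorem:prefix_stability}--\ref{theorem:eviction}.

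\emph{Necessity.} Let the eviction $\mathcal{E}$ remove or alter a token at position $j$ of $T_t \oplus \Delta X_t$. Then the first differing index between $T_t$ and $T_t^{\mathcal{E}}$ satisfies $i^* \le j$. In general $j$ lies early in the sequence, since eviction acts on old history in the prefix; the extreme case is $j=1$ after reordering. By \eqref{eq:TTFT_derivation_2}, computing the cache for $T_t^{\mathcal{E}}$ on the primary resource costs $\propto L d_h (|T_t^{\mathcal{E}}|-i^*+1)(|T_t^{\mathcal{E}}|+i^*)/2$. Because $|T_t^{\mathcal{E}}|$ is of the order of $M_{max}$, which by construction is much larger than the stable-prefix regime $i^*_{min}$, this cost exceeds $B_{max}$, i.e.\ it violates \eqref{eq:prefix_stability_condition}.

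So if the recomputation sits on the primary resource's critical path, the task issued at that step has $\text{TTFT} > \tau_{max}$, contradicting \eqref{eq:real-time_constraint}. The primary resource is assumed serial, so interleaving the work does not help. Splitting the recomputation into chunks only delays the problem: either the stale $T_t$ keeps growing and eventually hits $M_{max}$, or some inference step absorbs a chunk and its latency rises. I would handle this with a throughput argument. The primary resource must already spend its per-step budget serving $\Delta X_t$, so any extra work on it increases some TTFT. The recomputation must therefore run on a resource disjoint from the primary one, i.e.\ a parallel resource.

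\emph{Sufficiency.} Construct the schedule explicitly.
\begin{enumerate}
\item At time $t_0$, when $|T_{t_0}|$ crosses a threshold $M_{max}-\delta$, the secondary resource $\mathcal{R}_2$ computes the cache for $T_{t_0}^{\mathcal{E}}$ in time $W$.
\item Meanwhile $\mathcal{R}_1$ keeps serving with full prefix reuse via Theorems~\ref{theorem:prefix_stability}--\ref{theorem:incremental_extensibility}, appending the increments $\Delta X_{t_0},\dots,\Delta X_{t_0+k}$.
\item $\mathcal{R}_2$ then reconciles by appending these increments to $T_{t_0}^{\mathcal{E}}$. Since they are pure appends, this is an incremental extension and costs only the suffix term of \eqref{eq:TTFT_derivation_2}. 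I would iterate this catch-up until the backlog of increments is small.
\item Finally, an atomic swap replaces $T_t$ on $\mathcal{R}_1$. After the swap, the next task on $\mathcal{R}_1$ has $i^* = |T_t^{\mathcal{E}}|+1$, satisfying \eqref{eq:optimal_strategy_definition}.
\end{enumerate}
Correctness of the reconciliation follows from causal attention: the KV entries of the appended tokens depend only on their prefix, so they are identical to those obtained by processing the concatenated sequence directly.

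\emph{Main obstacle.} The hardest step is guaranteeing that the swap happens before the memory constraint \eqref{eq:finite_memory_constraint} is violated, i.e.\ that the catch-up converges. I would assume a bounded information arrival rate $r = \sup_t |\Delta X_t|$ per unit time and a prefill throughput $\rho$ on $\mathcal{R}_2$ with $\rho > r$. The backlog then shrinks geometrically with ratio $r/\rho$. The total time to swap is bounded by roughly $W/(1-r/\rho)$, and the memory consumed in that time is at most $r W/(1-r/\rho)$. Choosing the slack $\delta$ at least this large ensures $\text{Memory}(T_t)\le M_{max}$ throughout. The argument also needs two assumptions to be stated explicitly: the swap must be zero-latency (a pointer exchange), and $\mathcal{R}_2$ must hold its own memory budget.
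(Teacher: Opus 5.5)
Your proof has the same skeleton as the paper's (Appendix~D). Necessity is by contradiction: a synchronous eviction drags $i^*$ to an early position, so the recomputation costs essentially a full quadratic prefill. The construction is then identical: trigger below the memory limit, evict a copy, warm it up on $\mathcal{R}_2$ while $\mathcal{R}_1$ keeps appending, catch up by pure appends, and swap pointers.

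Where you genuinely differ is the viability step. The paper plugs its quadratic cost model into one warm-up pass, $\Delta t_{warm}\propto(\epsilon|T_t|)^2$, and one catch-up pass over the $\Delta N=\dot{X}\,\Delta t_{warm}$ tokens that arrived during warm-up. It then asserts that $\epsilon$ and $N_{buffer}$ can always be chosen so that $\Delta t_{warm}+\Delta t_{recon}\le N_{buffer}/\dot{X}$.

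Your iterated catch-up, summed as a geometric series with ratio $r/\rho$, is more careful on two counts. First, it accounts for the tokens that arrive during the catch-up itself. The paper's $\Delta t_{recon}$ covers only $\Delta N$, although Algorithm~\ref{alg:asynchronous_reconciliation} does loop. So your swapped-in state is actually synchronized up to a small residual. Second, it isolates $\rho>r$ as the condition under which the backlog drains at all, which the paper's ``we can always set the free variables'' leaves implicit. In the paper's model $\rho$ is roughly the inverse of the per-token extension cost $\propto|T_t^{\mathcal{E}}|$. Moreover, $\rho>r$ essentially follows from $\mathcal{R}_1$ sustaining real-time extension near $M_{max}$, provided $\mathcal{R}_2$ is identical hardware.

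What the paper's version buys in return is the explicit dependence of warm-up time on the retention ratio $\epsilon$. In yours $W$ is treated as given, although it depends on $\delta$ through $|T_t^{\mathcal{E}}|=\epsilon(M_{max}-\delta)$. This is harmless, since a larger $\delta$ only shrinks $W$ and so a consistent $\delta<M_{max}$ exists, but you should say it.

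Two smaller points. First, your throughput argument against time-slicing the recomputation on $\mathcal{R}_1$ goes beyond the paper, which passes directly from ``synchronous in-place eviction is infeasible'' to ``offload''. However, it relies on $\mathcal{R}_1$ being saturated, which \eqref{eq:real-time_constraint}--\eqref{eq:finite_memory_constraint} do not imply. With idle time between queries, a preemptible background prefill on the same device is not excluded, so state saturation as a hypothesis.

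Second, in the necessity step the right comparison is between $i^*\le j$ and $i^*_{min}$. Equivalently, compare the recomputed span $|T_t^{\mathcal{E}}|-i^*+1$ with the span that $B_{max}$ permits. Comparing $M_{max}$ with $i^*_{min}$ is wrong, since $i^*_{min}$ is itself close to the sequence length. Also, $|T_t^{\mathcal{E}}|$ being of order $M_{max}$ is an assumption that the eviction retains a non-negligible fraction. The paper likewise hides this behind ``non-trivial eviction''.
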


\begin{proof}
    See Appendix D. 
\end{proof}

The proof is based on showing the eviction process must be offloaded to satisfy Theorem~\ref{theorem:prefix_stability}, and construct a method and derive an inequality constraining viable free variables.

\begin{corollary}\label{corollary:optimality_of_csr}[Optimal Strategy]
A task structure strategy $S$ that satisfies Theorem~\ref{theorem:prefix_stability}-\ref{theorem:reconciliation} is the optimal strategy $S^*$ for optimizing~\eqref{eq:seq_robot_ai_optimization_obj} while satisfying the constraints~\eqref{eq:real-time_constraint}-\eqref{eq:finite_memory_constraint}.
\end{corollary}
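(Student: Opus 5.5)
The corollary follows from the four theorems. I would compare any feasible strategy against one satisfying Theorems~\ref{theorem:prefix_stability}--\ref{theorem:reconciliation}, and show the latter attains the smallest possible value of $\max_t \text{TTFT}(T_t|S)$.

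\textbf{Step 1: a universal lower bound.} By \eqref{eq:TTFT_derivation_2}, $\text{TTFT}(T_{t+1})$ is monotonically decreasing in $i^*(T_t,T_{t+1})$ for fixed $|T_{t+1}|$. The increment $\Delta X_t$ carries information $\mathcal{I}^{(t+1)}_{req}$ not yet in $\text{Info}(T_t)$, so constraint \eqref{eq:information_sufficiency_constraint} forces $\Delta X_t$ to be processed by any feasible $S$. Hence $i^* \le |T_t|+1$ is the largest achievable prefix match. Every feasible $S$ therefore satisfies
\begin{equation*}
\text{TTFT}(T_{t+1}|S) \ge c\,L\,d_h \sum_{i=|T_t|+1}^{|T_{t+1}|} i
\end{equation*}
for the unavoidable suffix. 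This bound depends only on the increment lengths and the context length, not on how earlier tokens are ordered.

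\textbf{Step 2: the candidate attains the bound at every step.} A strategy satisfying Theorems~\ref{theorem:prefix_stability} and~\ref{theorem:incremental_extensibility} has $T_{t+1}=T_t\oplus\Delta X_t$, so $i^*=|T_t|+1$ by \eqref{eq:optimal_strategy_definition}. This gives equality in Step 1 for every non-eviction step. Theorem~\ref{theorem:eviction} guarantees \eqref{eq:finite_memory_constraint} holds as $t\to\infty$. Theorem~\ref{theorem:reconciliation} guarantees the eviction steps also attain the bound on the primary resource: after the atomic swap, the primary resource sees a fully warmed cache of $T_t^{\mathcal{E}}$ reconciled with $\Delta X_t$, so it again only processes the new suffix. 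Thus the per-step TTFT equals the lower bound for all $t$, including eviction cycles.

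\textbf{Step 3: minimax optimality.} Since the candidate achieves the pointwise lower bound for every $t$, it also achieves the minimum of $\max_t$. Feasibility with respect to \eqref{eq:real-time_constraint} follows from Theorem~\ref{theorem:prefix_stability}'s budget condition, because $|T_t|+1\ge i^*_{min}$.

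\textbf{Main obstacle.} The hardest step is making Step 1 rigorous across strategies with different context lengths $|T_t|$. A strategy that evicts more aggressively could have a smaller quadratic term. I would handle this by fixing the memory and information requirements, so that $|T_t|$ is bounded below by what is needed to contain $\mathcal{I}^{(t)}_{req}$. I would then state optimality relative to strategies holding the same information content, and argue that under this normalization the suffix term is the only strategy-dependent quantity.
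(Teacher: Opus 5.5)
Your route differs from the paper's. Appendix~E builds no lower bound. It reads Theorem~\ref{theorem:prefix_stability} as already saying that \eqref{eq:seq_robot_ai_optimization_obj} is optimized once \eqref{eq:optimal_strategy_definition} holds. It then assigns \eqref{eq:real-time_constraint} to Theorems~\ref{theorem:prefix_stability} and~\ref{theorem:reconciliation}, \eqref{eq:information_sufficiency_constraint} to Theorem~\ref{theorem:incremental_extensibility}, and \eqref{eq:finite_memory_constraint} to Theorems~\ref{theorem:eviction}--\ref{theorem:reconciliation}. Your Steps 1--3 try to prove the sufficiency direction that this bookkeeping takes for granted; note that \eqref{eq:optimal_strategy_definition} is stated only as a necessary condition on $S^*$. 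That is the right thing to attempt, but Step 3 does not follow from Steps 1--2.

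The bound in Step 1 is indexed by $|T_t|, |T_{t+1}|$ of the very strategy being bounded. So Step 2 only shows that $S$ attains \emph{its own} bound, and says nothing about a competitor $S'$ with shorter contexts. This is not a technicality:
\begin{itemize}
\item Two strategies that both satisfy Theorems~\ref{theorem:prefix_stability}--\ref{theorem:reconciliation} but use different $\tau_{mem}$ (or retention ratios $\epsilon$) have peak TTFT $\propto \Delta\,\tau_{mem}$. They have different objective values, so they cannot both be ``the'' optimum.
\item A window or retrieval strategy that recomputes from scratch a context of $W$ tokens containing $\mathcal{I}_{req}^{(t)}$ costs $\propto W^2/2$. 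This beats the $\propto \Delta\,|T_t|$ cost of $S$ whenever $W^2 < 2\Delta|T_t|$; the BM25 rows of Table~\ref{tab:ama_bench} are an empirical instance.
\end{itemize}
Constraint \eqref{eq:information_sufficiency_constraint} as written does not exclude such $S'$. So the gap is real, and the paper's mapping sidesteps it rather than closing it.

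The normalization in your last paragraph is the missing ingredient. It is an added hypothesis that narrows the claim, not a proof step, so state it explicitly: every admissible $S'$ must have $|T'_{t+1}| \ge |T_{t+1}|$ and must freshly process at least $|\Delta X_t|$ tokens at step $t+1$. In words, information cannot be carried in fewer tokens, and competitors retain at least what $S$ retains. Under that hypothesis the comparison is elementary. A sum of $k' \ge |\Delta X_t|$ consecutive positive integers ending at $n' \ge |T_t| + |\Delta X_t|$ dominates the sum of the $|\Delta X_t|$ consecutive integers ending at $|T_t| + |\Delta X_t|$. This gives optimality within that class.

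Step 2 also needs two repairs at eviction:
\begin{itemize}
\item At the swap step $|T_{t+1}| < |T_t|$, so your bound $\sum_{i=|T_t|+1}^{|T_{t+1}|} i$ is an empty sum, and the formal $i^*(T_t, T_{t+1})$ from \eqref{eq:first_differing_token} is small. You must measure reuse against the cache on the current primary resource (the warmed cache of $T_t^{\mathcal{E}}$ plus reconciled chunks), not against $T_t$.
\item Even then, equality holds only up to the at most $N_{\text{catchup}}$ buffered tokens that Algorithm~\ref{alg:asynchronous_reconciliation} defers to the first post-swap inference.
\end{itemize}
Finally, feasibility in Step 3 presupposes that some $i^*_{min}$ exists. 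The paper explicitly conditions on this; otherwise no valid strategy exists.
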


\begin{proof}\label{proof:corollary_1}
    See Appendix E. 
\end{proof}

The proof is based on mapping the optimization of~\eqref{eq:seq_robot_ai_optimization_obj} and constraints satisfaction~\eqref{eq:real-time_constraint}-\eqref{eq:finite_memory_constraint} to each theorem.

In the next section we present the Cached State Representation (CSR) framework as the practical task representation structure strategy that is closest to $S^*$.

\subsection{Cached State Representation (CSR) Framework}
\label{sec:csr_framework}

In this section we introduce our Cached State Representation (CSR) framework as the concrete task representation implementing the optimal strategy $S^*$ derived in Sec.~\ref{sec:optimal_task_structure}. We derive CSR as the direct embodiment of the Prefix Stability (Theorem~\ref{theorem:prefix_stability}) and Incremental Extensibility (Theorem~\ref{theorem:incremental_extensibility}) properties of $S^*$:

\begin{figure}[!t]
\centering
\includegraphics[width=0.48\textwidth]{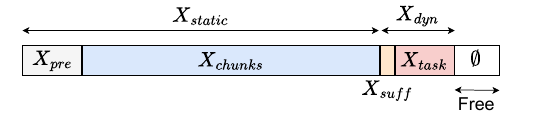}
\vspace{-8mm}
\caption{Structure of the Cached State Representation (CSR). The context is partitioned into a reusable static prefix $X_{static}$ and a transient dynamic suffix $X_{dyn}$.}
\label{fig:csr_structure}
\vspace{4mm} 
\includegraphics[width=0.48\textwidth]{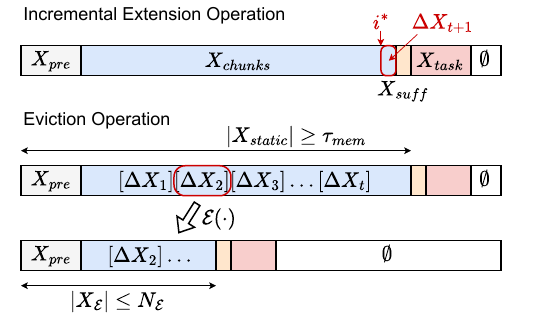}
\vspace{-8mm}
\caption{CSR operations. (Top) Incremental Extension appends $x_{t+1}$ while maintaining cache. (Bottom) Eviction removes older chunks to satisfy memory constraints.}
\label{fig:csr_operations}
\vspace{-6mm}
\end{figure}

First, a task structure $T_t$ that satisfies Theorem~\ref{theorem:prefix_stability} must guarantee a minimal stable prefix $i^*_{min}$ in~\eqref{eq:prefix_stability_condition}.
We achieve this by partitioning the robot AI's state into a static state prefix $X_{static}$ containing immutable information. 
Static information include robot profile, agency and behavior information, in-context learning information, action descriptions, etc. We call this set of information the state prefix $X_{pre}$.
Additionally, all information accumulated during robot AI operation including observational experience, internal thoughts, and action execution output, etc,. are stored as state chunks $x_t$ in an ordered sequence $X_{chunks} = (x_0, \ldots, x_t)$.
The KV cache for
\begin{equation}
    \label{eq:x_static}
    X_{static} = (X_{pre}, X_{chunks})
\end{equation}
is computed once and guarantees $i^* \ge |X_{static}|$.

Secondly, to satisfy Theorem~\ref{theorem:incremental_extensibility}, $T_t$ must be incrementally extensible~\eqref{eq:incremental_extensibility}. We implement this by appending new state information $x_t$ as a sequence of state chunks 
which accumulates all information the robot AI obtains during operation such as observational experience, internal thoughts, and action execution output, etc.
Continuously appending $x_t$ while performing high-frequency robot AI inference tasks allows the KV cache computations~\eqref{eq:k}-\eqref{eq:v} to be amortized across many small state chunk subsets $\Delta X_{chunks} = (x_{t+1}, \ldots, x_T)$ so that $i^*$ in \eqref{eq:first_differing_token} is close to the optimal prefix stability~\eqref{eq:optimal_strategy_definition}.

Practical general-purpose robot AI systems also need to consider dynamic information $X_{dyn}$ representing non-sequential information:
First, an information component where to write instantaneous information lacking historical value, such as current time, to avoid cluttering $X_{chunks}$. We call this component the state suffix $X_{suffix}$.
Secondly, a component $X_{task}$ where to specify the current task instruction, such as decision making or reply generation.
Due to its dynamic nature, the KV cache for
\begin{equation}
    \label{eq:x_dynamic}
    X_{dyn} = (X_{suff}, X_{task})    
\end{equation}
is generally recomputed for every task and should be as small as possible. 

The resulting CSR structure $T_t^*$ is illustrated in Fig.~\ref{fig:csr_structure} and formalized as the concatenation of all aforementioned information components:
\begin{equation}
    T_t^* = (X_{static}, X_{dyn}) \\
\end{equation}
\begin{equation}
    |X_{static}| \gg |X_{dyn}|
\end{equation}
\begin{equation}
    \label{eq:csr_i_star}
    i^* = |X_{static}| + 1.
\end{equation}
Figure~\ref{fig:csr_operations} illustrate the incremental extension and eviction operations on $T_t^*$.
Note that $T_t^*$ approaches $S^*$ as $|X_{dyn}| \rightarrow 0$.
In the next section we present a practical implementation of the reconciliation method based on Theorem~\ref{theorem:reconciliation}.





\subsection{Asynchronous State Reconciliation Algorithm}

Theorem~\ref{theorem:reconciliation} necessitates a state reconciliation method to overcome the fundamental tension between Theorems~\ref{theorem:prefix_stability} (stable prefix) and~\ref{theorem:eviction} (eviction).
We present the asynchronous state reconciliation method in Algorithm~\ref{alg:asynchronous_reconciliation}, with the process flow illustrated in Fig.~\ref{fig:reconciliation}. This implementation realizes Theorem~\ref{theorem:reconciliation} and follows the constructive proof provided in Appendix~D.

The method bifurcates the inference process into two concurrent execution tracks: an active primary resource $\mathcal{R}_1$ responsible for serving robot AI inference tasks, and a secondary resource $\mathcal{R}_2$ used for asynchronously reconciling the evicted state $X_\mathcal{E}$ in the background.
When $\mathcal{R}_1$ reaches a memory threshold $\tau_{mem}$, it triggers the eviction operation \eqref{eq:eviction_operation} and begins recomputing the KV cache \eqref{eq:K_cache}-\eqref{eq:V_cache} (i.e., warmup) on $\mathcal{R}_2$ (Lines 10-14). During this phase, $\mathcal{R}_1$ continues to serve inference tasks in parallel, while new incoming state chunks $\Delta X_t$ are simultaneously buffered in $X_{buffer}$ (Lines 6-8).
Once the initial KV cache is recomputed, $\mathcal{R}_2$ iteratively updates the cache with the buffered chunks (i.e., catch-up) until the buffer is drained (Lines 16-20).
Upon convergence, an atomic swap promotes $\mathcal{R}_2$ to the primary resource (Lines 22-23). When $\mathcal{R}_2$ eventually fills, the reconciliation process is repeated on $\mathcal{R}_1$.
The method ensures continuous real-time operation without latency spikes.

\section{EXPERIMENTS}
\label{sec:experiments}

We evaluate the proposed CSR framework on four distinct objectives: (1) quantifying the Time-to-First-Token (TTFT) reduction, (2) assessing the hardware generalization, (3) validating the stability of the ASR algorithm, and (4) benchmark embodied AI task performance.

\subsection{Experimental Setup}
\noindent \textit{Model and hardware:} Unless otherwise noted, experiments utilize the \textit{Qwen 2.5-235B-Instruct-FP8} model served via a vLLM inference server with all caching features enabled on 4$\times$H100 GPUs. We utilize \textit{Qwen 2.5-32B-Instruct} for single GPU ablation studies.
To assess real-world performance, we deploy the CSR framework on a physical robot. The system is built in ROS 2 Jazzy and runs on a Raspberry Pi 5.
The robot is connected to an on-premise GPU server by a wireless network, reflecting the emerging mobile edge computing paradigm enabled by post-5G infrastructure~\cite{nedo2024post5g}.

\begin{algorithm}[t!]
\caption{Asynchronous State Reconciliation}
\label{alg:asynchronous_reconciliation}
\begin{algorithmic}[1]
\REQUIRE Stream of state chunks $\{\Delta X_t\}$
\REQUIRE Parameters: $\tau_{\text{mem}}$, $N_{\text{catchup}}$
\item[\textbf{State Variables:}]
\item[] $X_{\text{static}} \gets X_{\text{pre}}$ \hfill // \textit{Current static state prefix}
\item[] $\Delta X_{\text{buffer}} \gets \emptyset$ \hfill // \textit{State chunk buffer}
\item[] $X_{\varepsilon} \gets \emptyset$ \hfill // \textit{Evicted/compressed state}
\item[] $r_{\text{primary}} \gets \mathcal{R}_1$ \hfill // \textit{Currently active resource}
\item[] $r_{\text{secondary}} \gets \mathcal{R}_2$ \hfill // \textit{Resource being warmed up}
\item[] $\textit{is\_reconciling} \gets \texttt{false}$ \hfill // \textit{Status flag}
\STATE \textbf{Procedure} \textsc{Increment}($\Delta X_t$):
    \STATE // \textit{Always update the active primary state}
    \STATE $X_{\text{static}} \gets X_{\text{static}} \oplus \Delta X_t$
    \STATE
    \STATE // \textit{Fork data if a background reconciliation is active}
    \IF{$\textit{is\_reconciling}$}
        \STATE $\Delta X_{\text{buffer}} \gets \Delta X_{\text{buffer}} \oplus \Delta X_t$
    \ENDIF
    \STATE
    \IF{$|X_{\text{static}}| \geq \tau_{\text{mem}}$ \AND \NOT $\textit{is\_reconciling}$}
        \STATE $X_{\varepsilon} \gets \text{Evict}(X_{\text{static}})$
        \STATE Launch \textsc{Reconcile}() in parallel // \textit{Asynchronous}
        \STATE $\textit{is\_reconciling} \gets \texttt{true}$
    \ENDIF
\STATE
\STATE \textbf{Procedure} \textsc{Reconcile}():
    \STATE // \textit{Warm up secondary resource with evicted state}
    \STATE $\text{LLM}(X_{\varepsilon}, r_{\text{secondary}})$
    \STATE
    \STATE // \textit{Reconcile buffered chunks until sufficiently caught up}
    \STATE // \textit{NOTE: Reminder is computed at next task inference}
    \WHILE{$|\Delta X_{\text{buffer}}| > N_{\text{catchup}}$}
        \STATE $X_{\varepsilon} \gets X_{\varepsilon} \oplus \Delta X_{\text{buffer}}$
        \STATE $\Delta X_{\text{buffer}} \gets \emptyset$
        \STATE $\text{LLM}(X_{\varepsilon}, r_{\text{secondary}})$
    \ENDWHILE
    \STATE
    \STATE // \textit{Swap primary and secondary resource pointers}
    \STATE $r_{\text{primary}}, r_{\text{secondary}} \gets r_{\text{secondary}}, r_{\text{primary}}$
    \STATE $\textit{is\_reconciling} \gets \texttt{false}$
\end{algorithmic}
\end{algorithm}

\begin{figure}
\centering
\includegraphics[width=0.48\textwidth]{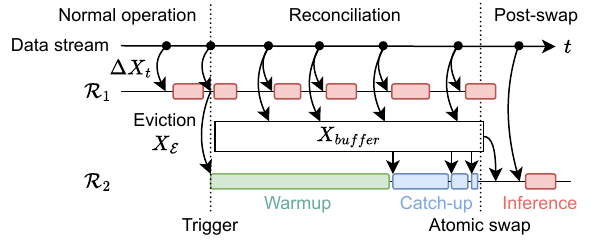}
\vspace{-4mm}
\caption{Asynchronous reconciliation process timeline. $\mathcal{R}_1$ serves real-time inference queries in parallel while $\mathcal{R}_2$ warms up on the evicted state $X_\mathcal{E}$ and integrates buffered chunks $X_{buffer}$ during catch-up. Upon synchronization, an atomic swap promotes $\mathcal{R}_2$ to primary, maintaining low-latency operation.}
\label{fig:reconciliation}
\vspace{-4mm}
\end{figure}

\noindent \textit{Prompt construction:} To emulate the CSR structure defined in Sec.~\ref{sec:csr_framework}, we construct input prompts $T$ as a concatenation of a static prefix $X_{static}$ and a dynamic suffix $X_{dyn}$:
\begin{equation}
    T = X_{static} \oplus X_{dyn}, \quad |X_{static}| = N, \;\; |X_{dyn}| = M.
\end{equation}
$X_{static}$ is sampled from random text with a fixed seed to ensure cache hits, while $X_{dyn}$ uses a random seed to enforce recomputation of the corresponding KV cache.

\noindent \textit{Metric:} We report TTFT, measuring the duration from request receipt to the generation of the first token. Each data point represents the mean and standard deviation over 5 independent trials.

\subsection{Evaluation Scenarios}

\noindent \textit{Latency analysis:}
We sweep the static context length $N \in [1, 120\text{K}]$ and dynamic suffix length $M \in \{1, \dots, 8192\}$. We compare CSR against a naive unordered baseline where the entire prompt $T$ is treated as dynamic (unordered), forcing a full cache re-computation at every step. This quantifies the overhead of $X_{task}$ size on real-time performance.


\noindent \textit{GPU comparison:}
To assess deployment feasibility, we evaluate CSR with $N \in \{30\text{K}, \dots, 120\text{K}\}$ across varying hardware configurations.

\noindent \textit{Infinite-horizon low-latency operation:}
We validate Algorithm~\ref{alg:asynchronous_reconciliation} by running a ROS 2 implementation on a physical robot system. The robot receives a continuous stream of state updates $\Delta X_t$ each containing 100 tokens. The robot runs on two \textit{Qwen 2.5-32B-Instruct} models; one model is used for the robot policy, and the other model is used for state reconciliation onset by a state eviction process. Eviction is triggered at a memory threshold $\tau_{mem}=110K$ tokens, and and discards the oldest half of the elements in $X_{chunks}$.
We measure full end-to-end LLM response cycle time and detect latency spikes by TTFT following how our robot policy predicts action decisions as single tokens. As our CSR framework reduce latency, not time for sequential output token generation, we refer to dedicated work on output generation speed for LLM-based robot policies for further application analysis.
See Appendix F for the pseudo-code describing the ASR implementation supporting concurrent requests and used on the physical robot system.

\noindent \textit{Embodied AI task performance:}
We evaluate CSR embodied AI task performance on AMA-bench~\cite{zhao2026amabench}. This benchmark measures long-horizon memory fidelity across continuous agent-environment interactions.
We compare performance against three competitive baselines representing the existing paradigms. First, StreamingLLM~\cite{xiao2023streamingllm} utilizing sliding attention windows representing kernel-level compute amortization. Secondly, BM25 representing a standard RAG implementation. Finally, AMA-Agent~\cite{zhao2026amabench}, a SOTA memory system leveraging causality graphs and tool-augmented retrieval.
We measure latency and jitter to characterize real-time inference capability of each method. Method performance is primarily estimated with the \textit{Qwen3.5-122B-A10B-FP8} models as the inference backbone. We also evaluate on the \textit{Qwen3-30B-A3B} model that is supported in TensorRT-LLM to serve StreamingLLM.


\section{RESULTS}
\label{sec:results}



\begin{table*}
\caption{GPU Performance Comparison (TTFT in seconds, mean)}
\vspace{-3mm}
\label{tab:gpu_performance_res}
\centering 
\begin{tabular}{|l|ccc|ccc|ccc|ccc|}
\hline
Input Length & \multicolumn{3}{c|}{30K} & \multicolumn{3}{c|}{60K} & \multicolumn{3}{c|}{90K} & \multicolumn{3}{c|}{120K} \\
M Value & M=1 & M=1024 & M=4096 & M=1 & M=1024 & M=4096 & M=1 & M=1024 & M=4096 & M=1 & M=1024 & M=4096 \\
\hline
4x H100 SXM5$^*$ & 0.13 & 0.20 & 0.41 & 0.22 & 0.31 & 0.65 & 0.30 & 0.42 & 0.89 & 0.38 & 0.56 & 1.13 \\
H100 SXM5 & 0.13 & 0.18 & 0.33 & 0.22 & 0.30 & 0.58 & 0.30 & 0.41 & 0.80 & 0.42 & 0.57 & 1.06 \\
GH200 & \textbf{0.11} & 0.16 & \textbf{0.31} & \textbf{0.19} & 0.27 & \textbf{0.54} & \textbf{0.27} & 0.37 & \textbf{0.76} & \textbf{0.36} & 0.49 & \textbf{0.99} \\
H100 PCIE & 0.29 & 0.32 & 0.61 & 0.45 & 0.51 & 1.02 & 0.68 & 0.71 & 1.45 & 0.78 & 0.88 & 1.87 \\
RTX PRO 6000 & 0.15 & \textbf{0.15} & 0.51 & 0.22 & \textbf{0.24} & 0.90 & 0.29 & \textbf{0.32} & 1.30 & 0.39 & \textbf{0.43} & 1.73 \\
\hline
\end{tabular}
\vspace{-4mm}
\end{table*}

\noindent \textit{Latency analysis:}
The primary bottleneck for real-time robotics is TTFT. As shown in Fig.~\ref{fig:res_latency} and Table~\ref{tab:latency_res}, CSR achieves a radical performance improvement over the unordered baseline. For a 120K token context, CSR reduces latency from $14.67$s to $0.56$s (26-fold reduction).
While latency scales linearly with dynamic suffix length $M$, CSR maintains sub-second response times $< 0.6$ s even with significant dynamic instructions $M=1024$. This indicates that the framework can accommodate substantial task-specific prompt engineering and large immediate observations without breaking the real-time loop.
These results provide direct empirical evidence that the optimal task structure formalized by CSR is strictly necessary to achieve manageable latency and avoid latency spikes. CSR thereby makes high-frequency control possible with massive LLMs on current general-purpose hardware.
%

\noindent \textit{GPU comparison:} Table~\ref{tab:gpu_performance_res} illustrate CSR's performance across different hardware. Notably, the workstation-class RTX PRO 6000 achieves the lowest latency for moderate dynamic loads ($M\approx1024$), outperforming data-center GPUs. This suggests that on-premise workstations are a viable and cost-effective platform for deploying massive embodied agents. However, for larger dynamic updates ($M \ge 4096$), the PCIe communication bandwidth likely becomes a bottleneck.

\noindent \textit{Infinite-horizon low-latency operation:}
Fig.~\ref{fig:ttft_vs_query_id} shows that CSR combined with the ASR algorithm (bottom) successfully eliminates the latency spike associated with an eviction operation in continuous operation over 30 minutes and 10 reconciliation cycles without introducing overhead.
Figure~\ref{fig:ttft_asr_summary} visualizes the latency distribution and jitter to characterize real-time predictability and validate our high-frequency control claims.
The density distribution (left) shows that TTFT remains bounded with a mean latency of 0.347 s and 99th percentile (p99) capped at 0.520 s.
The box plot (right)  demonstrates the long-horizon stability of the ASR process. The TTFT variance and median response lacks measurable drift across 10 successive state eviction cycles, backing our theoretical claims of perpetually stability and high-frequency operation.

\noindent \textit{Embodied AI task performance:}
Table~\ref{tab:ama_bench} shows the CSR framework successfully supports high-frequency operation without sacrificing extensive context required for complex embodied tasks.
The 30B model experiments demonstrate how StreamingLLM with a vanilla attention window size of 4096 tokens cannot represent long-horizon state memory. The maximum supported length of 32,768 tokens captures sufficient context but is slow due to having to recompute the KV cache for the large window at every state update.
For the more capable 122B model, CSR providing the complete state information achieves a SOTA recall of 0.836, substantially outperforming both the AMA agent with a graph-based memory system (0.459) and the BM25 RAG baseline (0.295) while maintaining a highly predictable, low-latency profile with minimal jitter (TTFT mean 0.220s, std 0.052s) similarly to the RAG method.




\begin{figure}
\centering
\includegraphics[width=0.49\textwidth]{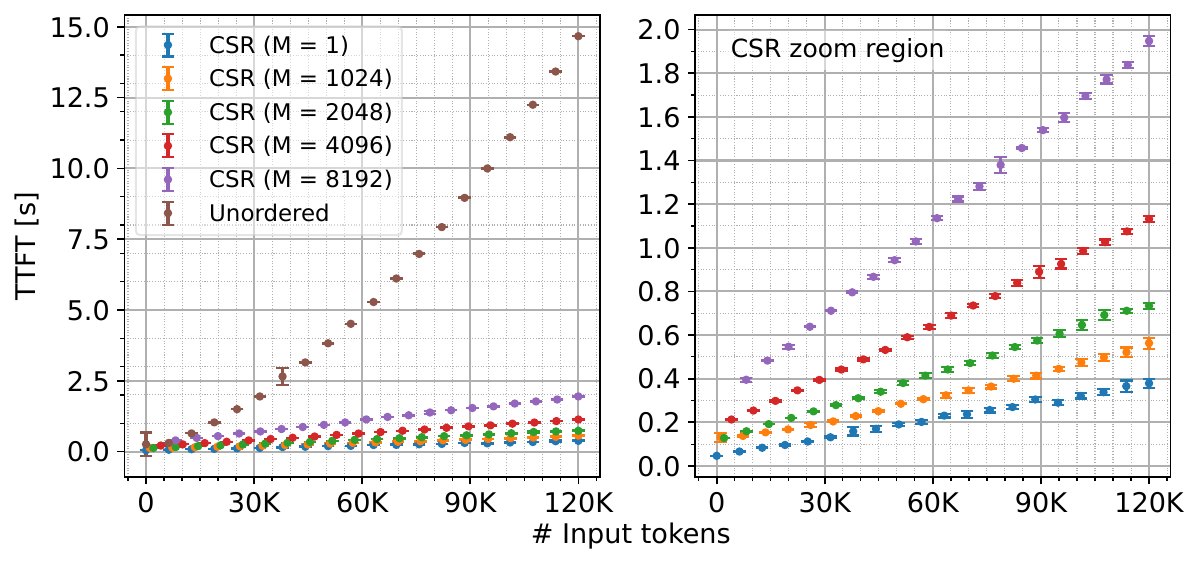}
\vspace{-8mm}
\caption{Latency experiment results. The left figure shows how prompts structured according to CSR radically reduces latency or time-to-first-token (TTFT) for long states and massive 235B model compared with generic unordered state prompts. The right figure display how larger dynamic state suffixes $x_{suff}$ and $x_{task}$ of token length M penalize latency.}
\label{fig:res_latency}
\vspace{-4mm}
\end{figure}

\begin{table}[h]
\vspace{2mm}
\caption{Latency results (mean ± std in seconds)}
\label{tab:latency_res}
\vspace{-3mm}
\begin{tabular}{|c|c|c|c|c|}
\hline
 & 30K & 60K & 90K & 120K \\
\hline
Unord. & 1.83 ± 0.00 & 4.90 ± 0.01 & 9.22 ± 0.01 & 14.67 ± 0.01 \\ \hline
M=1 & \textbf{0.12 ± 0.00} & \textbf{0.21 ± 0.02} & \textbf{0.27 ± 0.01} & \textbf{0.34 ± 0.01} \\ \hline
M=1024 & 0.20 ± 0.01 & 0.31 ± 0.01 & 0.42 ± 0.01 & 0.56 ± 0.03 \\ \hline
M=2048 & 0.26 ± 0.00 & 0.42 ± 0.01 & 0.58 ± 0.01 & 0.73 ± 0.01 \\ \hline
M=4096 & 0.41 ± 0.00 & 0.65 ± 0.01 & 0.89 ± 0.03 & 1.13 ± 0.02 \\ \hline
M=8192 & 0.69 ± 0.00 & 1.11 ± 0.01 & 1.53 ± 0.01 & 1.95 ± 0.02 \\ \hline
\end{tabular}
\end{table}


\begin{figure}[t!]
\centering
\includegraphics[width=0.485\textwidth]{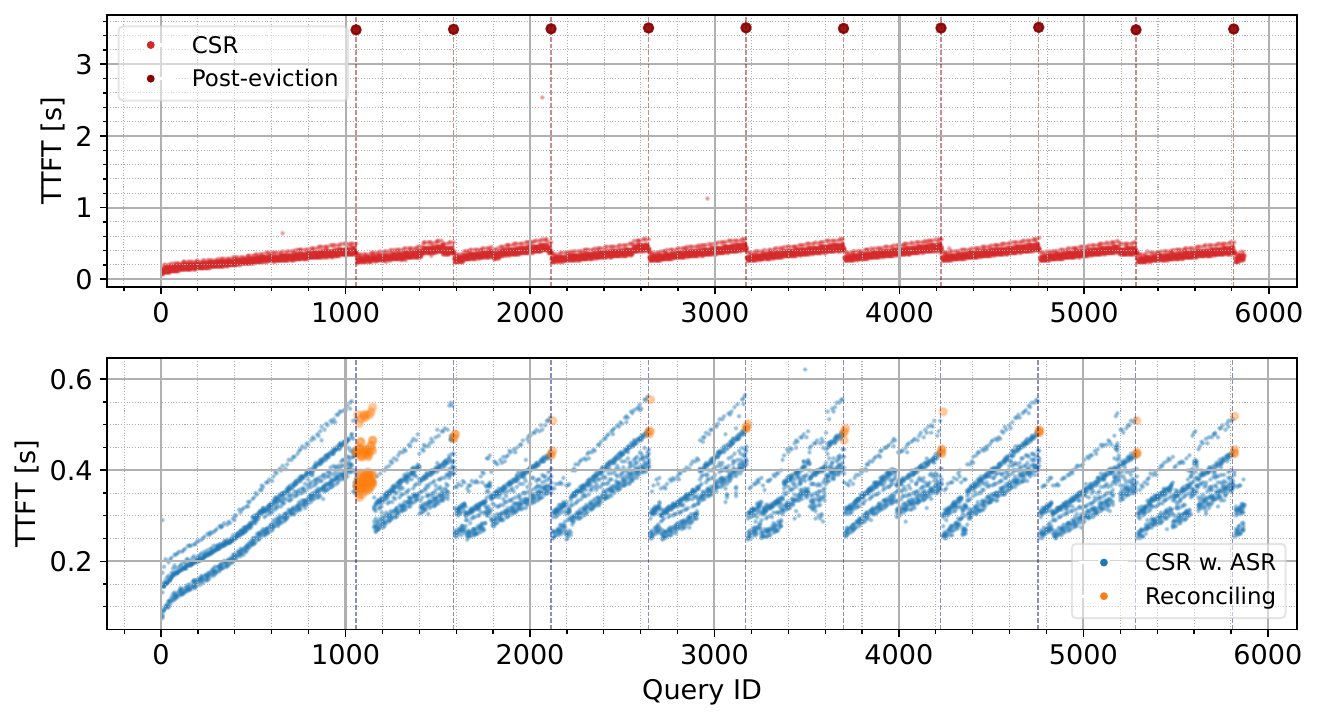}
\vspace{-8mm}
\caption{CSR with ASR (bottom) avoids latency spikes after a state eviction operation and infinite horizon low-latency operation.}
\label{fig:ttft_vs_query_id}
\vspace{-3mm}
\end{figure}

\begin{figure}[t!]
\centering
\includegraphics[width=0.485\textwidth]{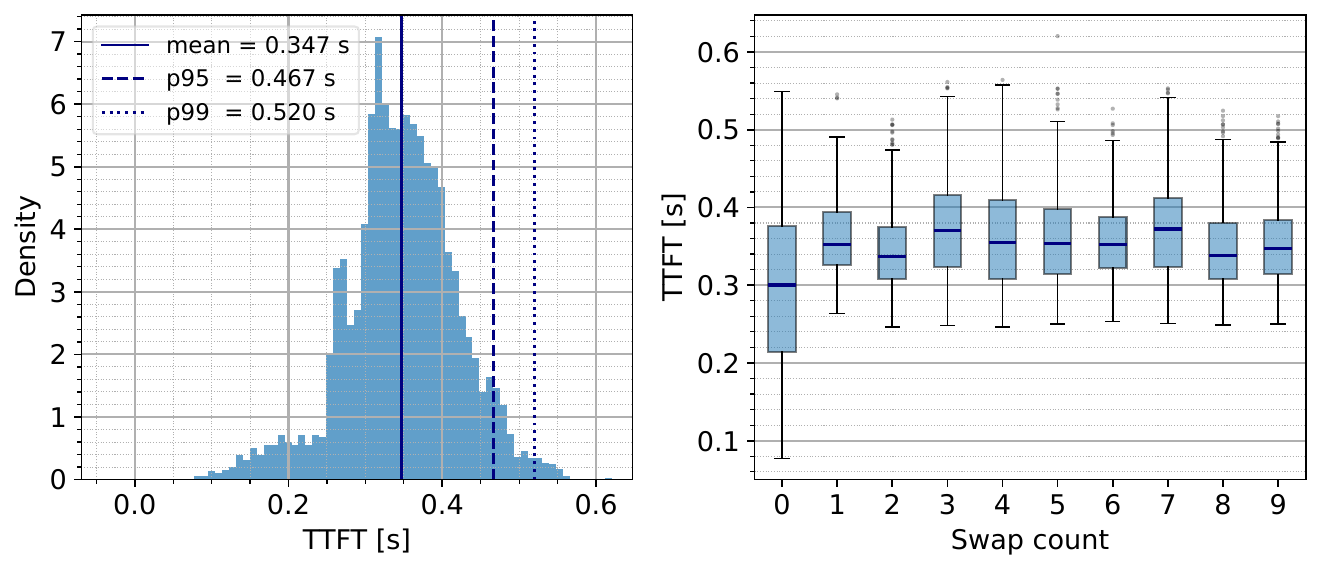}
\vspace{-8mm}
\caption{CSR with ASR maintains a bounded (left) and consistent (right) TTFT distribution over 10 eviction cycles.}
\label{fig:ttft_asr_summary}
\vspace{-3mm}
\end{figure}

\begin{table}[h]
\vspace{2mm}
\caption{AMA-Bench: Embodied AI Domain}
\label{tab:ama_bench}
\vspace{-3mm}
\centering
\begin{tabular}{|l|l|c|c|}
\hline
\multirow{2}{*}{Model} & \multicolumn{1}{c|}{\multirow{2}{*}{Method}} & \multicolumn{2}{c|}{Embodied AI} \\ \cline{3-4}
 & & TTFT mean (std) [s] & Recall \\ \hline
\multirow{5}{*}{30B} & StreamingLLM (4096$^\ast$)~\cite{xiao2023streamingllm} & 0.228 (0.0302) & 0.000 \\ \cline{2-4}
 & StreamingLLM (32,768$^\ast$)~\cite{xiao2023streamingllm} & 2.674 (1.084) & 0.180 \\ \cline{2-4}
 & BM25 & \textbf{0.137 (0.0402)} & 0.066 \\ \cline{2-4}
 & AMA-Agent~\cite{zhao2026amabench} & 2.255 (1.960) & 0.066 \\ \cline{2-4}
 & CSR (ours) & 0.179 (0.0460) & 0.115 \\ \hline
\multirow{3}{*}{122B} & BM25 & 0.2059 (0.1027) & 0.295 \\ \cline{2-4}
 & AMA-Agent~\cite{zhao2026amabench} & 0.9262 (0.287) & 0.459 \\ \cline{2-4}
 & CSR (ours) & 0.220 (0.052) & \textbf{0.836} \\ \hline
 \multicolumn{4}{l}{\footnotesize $^\ast$: Sliding attention window size [tokens] } \\
\end{tabular}
\end{table}

\section{CONCLUSIONS}
\label{sec:conclusions}

We presented a theoretical and empirical investigation of state structure and the latency bottleneck in LLM-based embodied intelligence. By formalizing the sequential robot AI inference problem, we proved that Prefix Stability, Incremental Extensibility, and Asynchronous State Reconciliation are necessary conditions for continuous real-time operation even with massive models.
The CSR framework instantiates these conditions as a practical task structuring strategy, enabling optimal KV-cache reuse across sequential inference tasks without modifying the underlying model or inference engine.
To sustain these properties over infinite horizons, the Asynchronous State Reconciliation (ASR) algorithm offloads state eviction to a secondary compute resource, eliminating the latency spikes that would otherwise periodically violate real-time deadlines.
Empirical validation on a physical robot with a 235B parameter model confirms a 26-fold TTFT reduction at 120K token contexts, spike-free operation across 10 eviction cycles, and SOTA embodied task recall of 0.836, nearly double the next best system by retaining the complete state context while maintaining RAG-level latency.
Together, these results demonstrate that wirelessly connected on-premise servers running massive cognitive engines are a viable architecture for high-frequency embodied AI, and that the path to capable real-time agents runs through principled state representation rather than model compression or context truncation.

\section*{APPENDIX}

\subsection{Proof for Theorem~\ref{theorem:prefix_stability}}
\label{proof:theorem_1}

\begin{proof} The total number of floating-point operations $N_{FLOPs}$ to complete a task $T_t$ with the real-time constraint \eqref{eq:real-time_constraint} requires
\begin{equation}
    \label{eq:flops_constraint}
    N_{FLOPS}(T_t) \le \tau_{max} F_{hw},
\end{equation}
where $F_{hw}$ is the hardware's FLOPS. We define
\begin{equation}
    \label{eq:max_comp_budget}
    B_{max} = \tau_{max} F_{hw}
\end{equation}
as the maximum computational budget corresponding to~\eqref{eq:flops_constraint}. We construct a proof by contradiction by assuming prefix stability is not necessary to satisfy \eqref{eq:flops_constraint} and \eqref{eq:max_comp_budget} with $i^* = 1$ in \eqref{eq:first_differing_token}:
\begin{equation}
    \label{eq:b_max_assumption}
    N_{FLOPs}(T_t^{i^*=1}) \le B_{max}
\end{equation}

The number of FLOPs is proportional to the number of tokens estimated in \eqref{eq:TTFT_derivation_2}:
\begin{equation}
    \label{eq:flops_i_1}
    N_{FLOPs}(T_t^{i^*=1}) \propto \tfrac{1}{2} L \: d_h \: |T_t|^2.
\end{equation}
Estimating the number of FLOPs for sequential tasks with prefix stability with a small token addition $\Delta \ll |T_t|$ gives
\begin{equation}
    \label{eq:flops_i_delta}
    N_{FLOPs}(T_t^{i^*=|T_t| - \Delta}) \propto \tfrac{1}{2} L \; d_h \; \Delta |T_t|
\end{equation}
The ratio of \eqref{eq:flops_i_1} and \eqref{eq:flops_i_delta} is
\begin{equation}
    \label{eq:flops_ratio}
    \frac{N_{FLOPs}(T_t^{i^*=1})}{N_{FLOPs}(T_t^{i^*=|T_t| - \Delta})} \approx \frac{|T_t|}{\Delta}.
\end{equation}

In this work, we consider $|T_t| \sim \mathcal{O}(10^5)$ tokens per task and small token increments $\Delta \sim \mathcal{O}(10^3)$ . Accordingly, \eqref{eq:flops_ratio} estimates
\begin{equation}
    \label{flops_ratio_w_tokens}
    \frac{N_{FLOPs}(T_t^{i^*=1})}{N_{FLOPs}(T_t^{i^*=|T_t| - \Delta})} \approx \frac{10^5}{10^3} = 10^2.
\end{equation}
Latency for LLMs with very large input token sequences is a recognized problem and experimentally verified in Fig.~\ref{fig:res_latency}, and lower latency is advantageous for a real-time robot AI system. Suppose it is possible to achieve satisfactory high-frequency operation by utilizing prefix stability such that
\begin{equation}
    \label{eq:prefix_stability_constraint_satisfaction}
    N_{FLOPs}(T_t^{i^*=|T_t| - \Delta}) \le B_{max}.
\end{equation}
However, if latency without stable prefix $N_{FLOPs}(T_t^{i^*=1})$ is $\mathcal{O}(100)$ times higher than when using stable prefix, it is a computational certainty that
\begin{equation}
    N_{FLOPs}(T_t^{i^*=1}) \gg B_{max}.
\end{equation}
Therefore, the assumption \eqref{eq:b_max_assumption} is false, meaning prefix stability is necessary to satisfy the real-time constraint \eqref{eq:prefix_stability_constraint_satisfaction} for large input token sequences.
\end{proof}

\subsection{Proof for Theorem~\ref{theorem:incremental_extensibility}}
\label{proof:theorem_2}

\begin{proof}
Let $S$ be a task structuring strategy that satisfies constraints \eqref{eq:real-time_constraint} and \eqref{eq:information_sufficiency_constraint}.
Theorem~\ref{theorem:prefix_stability} determines that $S$ must ensure prefix stability to satisfy \eqref{eq:real-time_constraint}. $S$ implementing the concatenation operation $\oplus$ in \eqref{eq:incremental_extensibility} ensures optimal prefix stability by achieving $i^* = |T_t| + 1$, and is the only optimal structure.

$S$ satisfies the information sufficiency constraint \eqref{eq:information_sufficiency_constraint} based on the premise that the total information context is non-decreasing
\begin{equation}
    \text{Info}(T_{t+1} | S) \supseteq \text{Info}(T_t | S) + \text{Info}(\Delta X_t).
\end{equation}
This ensures that any arbitrary task requirement can be met:
\begin{equation}
    \mathcal{I}_{req}^{(t+1)} \subseteq \text{Info}(T_{t+1} | S).
\end{equation}

Next, we prove that the necessity of Theorem~\ref{theorem:incremental_extensibility} by contradiction.
Assume that $S$ is not incrementally extensible. This means $S$ does not adhere to the structure in \eqref{eq:incremental_extensibility}.
Thus $S$ must modify $T_t$ in a way other than appending (e.g., insertion, deletion, or replacement) at an index $i \le |T_t|$ to encode the required information $\text{Info}(T_{t+1} | S)$. By the definition of $i^*$ in \eqref{eq:first_differing_token}, any such modification forces $i^*(T_t, T_{t+1}) \le |T_t|$. Smaller $i^*$ values increasingly violates the condition for sufficient prefix stability specified in Theorem~\ref{theorem:prefix_stability}.

This result contradicts the assumption that $S$ need not be incrementally extensible as defined in \eqref{eq:incremental_extensibility}.
\end{proof}

\subsection{Proof for Theorem~\ref{theorem:eviction}}
\label{proof:theorem_3}

\begin{proof}
We prove the necessity of eviction by contradiction.
Assume a strategy $S$ exists for infinite-horizon operation that satisfies the constraints \eqref{eq:real-time_constraint}-\eqref{eq:finite_memory_constraint} without an eviction operation $\mathcal{E}$.
Satisfying \eqref{eq:real-time_constraint} and \eqref{eq:information_sufficiency_constraint} means $S$ must be incrementally extensible according to Theorem~\ref{theorem:incremental_extensibility}.
The length of task sequence $|T_t|$ is therefore monotonically non-decreasing:
\begin{equation}
    |T_{t+1}| = |T_t| + \Delta X_t \ge |T_t|.
\end{equation}
For any non-trivial robot AI system, new information is continuously processed:
\begin{equation}
    \mathbb{E}[|\Delta X_t|] > 0.
\end{equation}
The total sequence length thus grows without bound:
\begin{equation}
    \label{eq:infinite_seq_len}
|T_t| = |T_0| + \sum_{\tau=0}^{t-1} |\Delta X_\tau| \implies \lim_{t \to \infty} |T_t| = \infty.
\end{equation}
Assuming memory cost is proportional to sequence length: $|T_t| \propto \text{Memory}(T_t|S)$), there will exist a time $t_{OOM}$ such that for all $t \ge t_{OOM}$ the finite memory constraint \eqref{eq:finite_memory_constraint} is violated.

Therefore, the initial assumption is false. Any viable $S$ for infinite-horizon operation must include an operation $\mathcal{E}$ as in \eqref{eq:eviction_operation} to prevent unbounded growth by removing information.
\end{proof}

\subsection{Proof for Theorem~\ref{theorem:reconciliation}}
\label{proof:theorem_4}

\begin{proof}
We prove that the eviction processes must be offloaded by contradiction.
Suppose a task representation $T_t$ satisfies Theorem~\ref{theorem:prefix_stability} with $i^* \approx |T_t|$.
Next an eviction operation \eqref{eq:eviction_operation} is performed synchronously on $T_t$.
Any non-trivial eviction operation necessarily modifies the prefix structure so that $i^* \ll |T_t|$, and thus violate \eqref{eq:prefix_stability_condition} in Theorem~\ref{theorem:prefix_stability}, meaning real-time operation is not possible.
Therefore, synchronous in-place eviction is not feasible and the operation must be offloaded.

Next, we create a proof by construction for the asynchronous state reconciliation processes:
\begin{enumerate}
    \item Duplicate $T_t \rightarrow \hat{T}_t$ when $|T_t|$ reaches a memory threshold $\tau_{mem} < N_{max}$ at time $t_{evict}$.
    \item Apply the eviction operation on the copy:
    \begin{equation}
        T_t^{\mathcal{E}} = \mathcal{E}(\hat{T}_t).
    \end{equation}
    \item Warm up $T_t^{\mathcal{E}}$ by recomputing the KV cache on the secondary hardware resource $\mathcal{R}_2$.
    \item The primary hardware resource $\mathcal{R}_1$ continue to process tasks and append new tokens $\Delta X_{t_{evict}:T}$ accumulated during the warm-up.
    \item Once $T_t^{\mathcal{E}}$ has completed warm-up and is synced or reconciled:
    \begin{equation}
        T_T^{\mathcal{E}} \equiv \mathcal{E}(T_t) \oplus \Delta X_{t_{evict}:T},
    \end{equation}
    the pointer for $T_T$ in $\mathcal{R}_1$ is swapped to $T_T^{\mathcal{E}}$ in $\mathcal{R}_2$:
    \begin{equation}
        T_T := T_T^{\mathcal{E}}.
    \end{equation}
\end{enumerate}
The new sequential task $T_{T+1}$ is now an incremental extension of the evicted task $\mathcal{E}(T_t)$ loaded in $\mathcal{R}_2$.

The processes is viable if and only if the total time for the parallel work (e.g., warm-up and reconciliation) is less than the time to memory overflow failure on $\mathcal{R}_1$:
\begin{equation}
    \label{eq:recon_time_inequlity}
    \Delta t_{warm} + \Delta t_{recon} \le \Delta t_{mem}.
\end{equation}

The worst-case warm-up time $\Delta t_{warm}$ represent the cost of recomputing the new prefix $T_T^{\mathcal{E}}$ from scratch ($i^* = 1$). From \eqref{eq:TTFT_derivation_2}:
\begin{equation}
    \label{eq:warm-up_time}
    \Delta t_{warm} \propto |T_t^{\mathcal{E}}|^2 = (\epsilon \cdot |T_t|)^2
\end{equation}
where $\epsilon$ is the ratio of tokens remaining after eviction $\mathcal{E}(T_t)$.

Reconciliation time $\Delta t_{recon}$ represents the cost of extending $T_t^{\mathcal{E}}$ by the tokens accumulated during warm-up:
\begin{equation}
    \Delta N = \dot{X} \cdot \Delta t_{warm},
\end{equation}
where $\dot{X}$ is the rate new tokens are appended to $T_t$ on $\mathcal{R}_1$.
The time to extend $T_t^{\mathcal{E}}$ is given by \eqref{eq:TTFT_derivation_2} with $i^*=|T_t^{\mathcal{E}}|+1$:
\begin{multline}
    \label{eq:recon_time}
    \Delta t_{recon} \propto \sum_{i=|T_t^{\mathcal{E}}|+1}^{|T_t^{\mathcal{E}}| + \Delta N} i \\
    = \frac{(|T_t^{\mathcal{E}}| + \Delta N - (|T_t^{\mathcal{E}}|+1) + 1)(|T_t^{\mathcal{E}}| + \Delta N + (|T_t^{\mathcal{E}}|+1))}{2} \\
    = \frac{\Delta N (2 |T_t^{\mathcal{E}}| + \Delta N + 1)}{2} \propto \Delta N (2 |T_t^{\mathcal{E}}| + \Delta N + 1).
\end{multline}

The time to out of memory failiure $\Delta t_{mem}$ once eviction is triggered is
\begin{equation}
    \label{eq:oom_time}
    \Delta t_{mem} = \frac{N_{buffer}}{\dot{X}} = \frac{N_{max} - |T_t|}{\dot{X}}
\end{equation}
with $N_{buffer}$ representing the number of free space tokens.

Substituting \eqref{eq:warm-up_time}, \eqref{eq:recon_time}, and \eqref{eq:oom_time} into \eqref{eq:recon_time_inequlity}:
\begin{equation}
    (\epsilon \cdot |T_t|)^2 + \Delta N (2 |T_t^{\mathcal{E}}| + \Delta N + 1) \le N_{buffer} / \dot{X}
\end{equation}
Since we can always set the free variables $\epsilon$ and $N_{buffer}$ such that the inequality holds, the construction is proved to be a viable and necessary method to satisfy all three theorems. 
\end{proof}

\subsection{Proof for Corollary~\ref{corollary:optimality_of_csr}}
\label{proof:corollary_1}

\begin{proof}
Theorem~\ref{theorem:prefix_stability} states that the primary objective~\eqref{eq:seq_robot_ai_optimization_obj} is optimized when strategy $S$ results in optimal prefix stability~\eqref{eq:optimal_strategy_definition}.
The real-time constraint~\eqref{eq:real-time_constraint} must be satisfied by the optimal prefix stability, or no valid strategy exists (e.g., $\tau_{max}$ is set too low for given hardware and sequence length $|T_n|$).
Continuous real-time operation is possible for $S$ implementing asynchronous state reconciliation as defined in Theorem~\ref{theorem:reconciliation}.

Theorem~\ref{theorem:incremental_extensibility} states that the information constraint~\eqref{eq:information_sufficiency_constraint} is satisfied for any $S$ that is incrementally extensible~\eqref{eq:incremental_extensibility}.

A strategy $S$ providing the eviction~\eqref{eq:eviction_operation} in Theorem~\ref{theorem:eviction} together with the reconciliation method in Theorem~\ref{theorem:reconciliation} satisfies the finite memory constrain~\eqref{eq:finite_memory_constraint}.

As~\eqref{eq:seq_robot_ai_optimization_obj} is optimized and all constraints~\eqref{eq:real-time_constraint}-\eqref{eq:finite_memory_constraint} are satisfied, the strategy $S$ must be the optimal strategy $S^*$.
\end{proof}

\subsection{Concurrent Request Routing ASR Algorithm}

Algorithm~\ref{alg:concurrent_req_asr} provides the complete per-query inference routing procedure as implemented in the real-world experiments, extending Algorithm~\ref{alg:asynchronous_reconciliation} to the concurrent request setting. While Algorithm~\ref{alg:asynchronous_reconciliation} describes the background state management logic (buffering, warm-up, catch-up, and the atomic swap), it abstracts over the question of how individual inference queries are served during the reconciliation window. Algorithm~\ref{alg:concurrent_req_asr} addresses this by introducing explicit sequence versioning (e.g. $k$, $k_t$) to classify each incoming query into one of three routing cases: (1) normal continuation on the primary resource, (2) reconciliation-phase bridging where the query is served on the primary resource using a stitched context $X_recon$ constructed from the current sequence extended with chunks received since eviction, and (3) straggler routing where delayed queries referencing the previous sequence version are served on the secondary resource, which retains its intact KV cache until the next eviction cycle.

\begin{strip}
\par\vspace{1em}\noindent 
\hrule height 0.8pt 
\vspace{3pt}
\refstepcounter{algorithm} 
\noindent \textbf{Algorithm \thealgorithm} Asynchronous State Reconciliation with Concurrent Request Routing
\label{alg:concurrent_req_asr}
\vspace{3pt}
\hrule height 0.4pt 
\vspace{3pt}
\begin{algorithmic}[1]
\REQUIRE Stream of states $X_t$, reconciliation catch-up threshold $N_{\text{catch-up}}$
\item[\textbf{State Variables:}]
\item[] $j \gets 0$  \hfill // \textit{Current active sequence static state char length}
\item[] $k \gets 0$  \hfill // \textit{Current active sequence version; current sequence is $k$, new evicted sequence is $k+1$}
\item[] $k_\text{ready} \gets 0$ \hfill // \textit{Highest sequence version fully precomputed on secondary resource}
\item[] $j_\varepsilon \gets \emptyset$ \hfill // \textit{Evicted sequence static state char length}

\item[] $X_{\text{recon}} \gets \emptyset$  \hfill // \textit{Reconciliation state: Extends current sequence with chunks from new evicted sequence}
\item[] $\Delta X_\varepsilon \gets \emptyset$ \hfill // \textit{Evicted state chunk buffer used during catch-up reconciliation phase} 
\item[] $r_{\text{primary}} \gets \mathcal{R}_1$ \hfill // \textit{Currently active resource with KV cache computed for current sequence}
\item[] $r_{\text{secondary}} \gets \mathcal{R}_2$ \hfill // \textit{Resource used to pre-compute the KV cache for the new evicted sequence}
\item[] $\text{is\_reconciling} \gets \texttt{false}$ \hfill // \textit{Status flag}
\item[] $\mathcal{L}_{\Delta X_\varepsilon} \gets \texttt{Mutex()}$ \hfill // \textit{Lock for the catch-up buffer}
\STATE \textbf{Procedure} \textsc{Inference}($X_t$):
    \STATE $j_t, k_t, j_{\varepsilon, t} \gets X_t$ \hfill // \textit{Unpack state metadata}
    \IF{$j_\varepsilon = \emptyset$}
        \STATE $j_\varepsilon = j_{\varepsilon,t}$ \hfill // \textit{State char length of old chunks in evicted states (e.g. excluding new chunks)}
    \ENDIF
    \STATE // \textit{Case 1: Sequence continuation}
    \IF{$k_t = k$}
        \IF{$j_t > j$}
            \STATE $X_{\text{recon}} \gets X_t[:j_t]$ \hfill // \textit{For use during reconciliation}
            \STATE $j \gets j_t$
        \ENDIF
        \STATE $\text{LLM}(X_t, r_{\text{primary}})$
    \ENDIF
    \STATE // \textit{Case 2: Reconciliation}
    \IF{$k_t > k$}
        \IF{$k_t > k_\text{ready}$}
            \STATE // \textit{$r_\text{secondary}$ has not finished precomputation $\rightarrow$ Bridge state and run on $r_\text{primary}$}
            \STATE $\Delta X_t \gets X_t[j_\varepsilon:j_t]$ \hfill // \textit{Extract new chunks from evicted state}
            \STATE $X_{\text{recon}} \gets X_{\text{recon}} \oplus \Delta X_t$ \hfill // \textit{Append chunks to reconciliation state }
            \STATE $\text{Lock}(\mathcal{L}_{\Delta X_\varepsilon})$
            \STATE $\Delta X_\varepsilon \gets \Delta X_\varepsilon \oplus \Delta X_t$ \hfill // \textit{Append chunks to catch-up buffer}
            \STATE $\text{Unlock}(\mathcal{L}_{\Delta X_\varepsilon})$
            \STATE $X_\text{dyn} \gets X_t[j_t:]$ \hfill // \textit{Extract dynamic suffix (incl. task instruction)}
            \STATE $X'_t \gets X_{\text{recon}} \oplus X_\text{dyn}$ \hfill // \textit{Construct task query based on extended current state sequence}
            \STATE $j_\varepsilon \gets j_t$ \hfill // \textit{Update eviction cursor}
            \STATE $\text{LLM}(X'_t, r_{\text{primary}})$ 
        \ELSIF{$k_t = k_\text{ready}$}
            \STATE // \textit{$r_\text{secondary}$ has finished precomputation $\rightarrow$ Resource swap}
            \STATE $r_{\text{primary}}, r_{\text{secondary}} \gets r_{\text{secondary}}, r_{\text{primary}}$ \hfill // \textit{Swap primary and secondary resource pointers}
            \STATE $\text{is\_reconciling} \gets \texttt{false}$
            \STATE $X_{\text{recon}} \gets X_t[:j_t]$ \hfill // \textit{For use during straggler queries}
            \STATE $j, k, j_\varepsilon \gets j_t, k_t, \emptyset$ \hfill // \textit{Update state cursors and run inference on new sequence}
            \STATE $\text{LLM}(X_t, r_\text{primary})$
        \ENDIF
    \ENDIF
    \STATE // \textit{Case 3: Straggler queries}
    \IF{$k_t < k$}
        \IF{$\text{is\_reconciling}$}
            \STATE // \textit{Protect $r_\text{secondary}$ from significantly delayed requests}
            \STATE $X_\text{dyn} \gets X_t[j_t:]$
            \STATE $X'_t \gets X_{\text{recon}} \oplus X_\text{dyn}$
            \STATE $\text{LLM}(X'_t, r_{\text{primary}})$
        \ELSE
            \STATE $\text{LLM}(X_t, r_\text{secondary})$ \hfill // \textit{Secondary resource holds the intact KV cache for the old sequence}
        \ENDIF
    \ENDIF
    
    \STATE

    \STATE \textbf{Procedure} \textsc{Reconcile}($X_\varepsilon, k_\text{target}$):
    \STATE $\text{is\_reconciling} \gets \texttt{true}$ \hfill // \textit{Lock $r_\text{secondary}$ from stragglers}
    \STATE $\text{Lock}(\mathcal{L}_{\Delta X_\varepsilon})$
    \STATE $\Delta X_\varepsilon \gets \emptyset$
    \STATE $\text{Unlock}(\mathcal{L}_{\Delta X_\varepsilon})$
    \STATE $\text{LLM}(X_\varepsilon, r_\text{secondary})$ \hfill // \textit{Precompute KV cache on secondary resource with evicted state}
    \STATE
    \STATE // \textit{Reconcile buffered chunks until sufficiently caught up}
    \STATE // \textit{NOTE: Reminder is computed at next task inference}
    \WHILE{$|\Delta X_\varepsilon | > N_{\text{catch-up}}$}
        \STATE $\text{Lock}(\mathcal{L}_{\Delta X_\varepsilon})$
        \STATE $\Delta X'_\varepsilon \gets \Delta X_\varepsilon$ \hfill // \textit{Copy and empty buffer for appending new chunks while precomputing KV cache}
        \STATE $\Delta X_\varepsilon \gets \emptyset$
        \STATE $\text{Unlock}(\mathcal{L}_{\Delta X_\varepsilon})$
        \STATE $X_{\varepsilon} \gets X_{\varepsilon} \oplus \Delta X'_\varepsilon$
        \STATE $\text{LLM}(X_{\varepsilon}, r_{\text{secondary}})$
    \ENDWHILE
    \STATE $\text{Lock}(\mathcal{L}_{\Delta X_\varepsilon})$
    \STATE $\Delta X_\varepsilon \gets \emptyset$
    \STATE $\text{Unlock}(\mathcal{L}_{\Delta X_\varepsilon})$
    \STATE $k_\text{ready} \gets k_\text{target}$ \hfill // \textit{Authorize the swap}
    
\end{algorithmic}
\vspace{3pt}
\hrule height 0.4pt 
\vspace{1em}
\end{strip}

\section*{ACKNOWLEDGMENT}

Robin Karlsson was partially supported by grant JPNP23003, commissioned by the New Energy and Industrial Technology Development Organization (NEDO).

\addtolength{\textheight}{-12cm}   

\bibliographystyle{IEEEtran}
\bibliography{references}

\end{document}